\documentclass[11pt]{article}
\usepackage{graphicx} %

\usepackage{amsmath}
\usepackage[colorlinks=true, allcolors=blue]{hyperref}
\usepackage[linesnumbered,boxed,ruled,vlined]{algorithm2e}
\usepackage{amsfonts}
\usepackage{graphicx}
\usepackage{amsthm}
\usepackage{dsfont}
\usepackage{mathtools}
\usepackage{bbm}
\usepackage{cleveref}
\usepackage{complexity}
\usepackage{enumitem}
\usepackage{tikz}
\usetikzlibrary{positioning}
\usepackage{thm-restate}
\usepackage{float}
\usetikzlibrary{calc}
\usetikzlibrary{shapes,decorations.markings}
\usepackage{caption}
\usepackage{subcaption}

\usepackage[english]{babel}
\usepackage[T1]{fontenc}

\usepackage[margin=1.in]{geometry}

\usetikzlibrary{arrows,intersections}

\SetCommentSty{mycommfont}

\usepackage[colorinlistoftodos]{todonotes}

\newtheorem{theorem}{Theorem}
\newtheorem{claim}{Claim}

\newtheorem{lemma}{Lemma}
\newtheorem{corollary}{Corollary}

\newtheorem{remark}{Remark}

\theoremstyle{definition}
\newtheorem{definition}{Definition}

\newenvironment{proofof}[1]{\begin{proof}[{\textit{Proof of #1}}]}{\end{proof}}

\newcommand{\calA}{\mathcal{A}}
\newcommand{\calB}{{\mathcal{B}}}

\newcommand{\calD}{\mathcal{D}}
\newcommand{\calE}{{\mathcal{E}}}

\newcommand{\calH}{{\mathcal{H}}}
\newcommand{\calW}{{\mathcal{W}}}
\newcommand{\calQ}{{\mathcal{Q}}}
\newcommand{\calX}{{\mathcal{X}}}

\newcommand{\eps}{\varepsilon}

\newcommand{\Ex}{\mathop{\mathbf{E}}}
\renewcommand{\Pr}{\mathop{\mathbf{Pr}}}

\newcommand{\Cbias}{{C_{\mathrm{bias}}}}
\newcommand{\athre}{{\alpha_{\mathrm{thr}}}}
\newcommand{\clb}{{c_{\mathrm{lb}}}}

\makeatletter
\newcommand*\rel@kern[1]{\kern#1\dimexpr\macc@kerna}
\newcommand*\widebar[1]{%
  \begingroup
  \def\mathaccent##1##2{%
    \rel@kern{0.8}%
    \overline{\rel@kern{-0.8}\macc@nucleus\rel@kern{0.2}}%
    \rel@kern{-0.2}%
  }%
  \macc@depth\@ne
  \let\math@bgroup\@empty \let\math@egroup\macc@set@skewchar
  \mathsurround\z@ \frozen@everymath{\mathgroup\macc@group\relax}%
  \macc@set@skewchar\relax
  \let\mathaccentV\macc@nested@a
  \macc@nested@a\relax111{#1}%
  \endgroup
}
\makeatother

\renewcommand{\hat}{\widehat}
\renewcommand{\tilde}{\widetilde}

\newcommand{\VCDim}{\mathrm{VCDim}}
\newcommand{\Loss}{\mathcal{L}}

\usepackage{csquotes}

\newcommand{\defeq}{\mathrel{\mathop:}=}
\newcommand{\loss}{\mathcal{L}}
\newcommand{\strl}{\mathcal{A}}
\newcommand{\weakl}{\mathcal{W}}
\newcommand{\hatd}{{\hat{d}}}

\newcommand{\Ber}{\mathrm{Ber}}
\renewcommand{\R}{\mathbb{R}}

\newcommand{\agen}{\alpha_{\texttt{gen}}}

\newenvironment{proofsk}{%
  \proof}{\endproof}

\title{The Cost of Parallelizing Boosting}
\author{Xin Lyu \thanks{Department of EECS, UC Berkeley. Email: xinlyu@berkeley.edu. Supported by Avishay Tal's Sloan Research Fellowship, NSF CAREER Award CCF-2145474, and Jelani Nelson's ONR grant N00014-18-1-2562.} \and Hongxun Wu \thanks{Department of EECS, UC Berkeley. Email: wuhx@berkeley.edu. Supported by Avishay Tal's Sloan Research Fellowship, NSF CAREER Award CCF-2145474, and Jelani Nelson's ONR grant N00014-18-1-2562.} \and Junzhao Yang \thanks{IIIS, Tsinghua University. Email: yang-jz20@mails.tsinghua.edu.cn}}
\begin{document}

\maketitle

\begin{abstract}
We study the cost of parallelizing weak-to-strong boosting algorithms for learning, following the recent work of Karbasi and Larsen. Our main results are two-fold: 

\begin{itemize}
    \item First, we prove a tight lower bound, showing that even ``slight'' parallelization of boosting requires an exponential blow-up in the complexity of training. 

    Specifically, let $\gamma$ be the weak learner's advantage over random guessing. The famous \textsc{AdaBoost} algorithm produces an accurate hypothesis by interacting with the weak learner for $\tilde{O}(1 / \gamma^2)$\footnote{In this paper, we use $\tilde{O}$ to hide the terms logarithmic in $\gamma$ and $m$ where $m$ is the size of the training set.} rounds where each round runs in polynomial time.
    
    Karbasi and Larsen showed that ``significant'' parallelization must incur exponential blow-up: Any boosting algorithm either interacts with the weak learner for $\Omega(1 / \gamma)$ rounds or incurs an $\exp(d / \gamma)$ blow-up in the complexity of training, where $d$ is the VC dimension of the hypothesis class. We close the gap by showing that any boosting algorithm either has $\Omega(1 / \gamma^2)$ rounds of interaction or incurs a smaller exponential blow-up of $\exp(d)$.
    
    \item Complementing our lower bound, we show that there exists a boosting algorithm using $\tilde{O}(1/(t \gamma^2))$ rounds, and only suffer a blow-up of $\exp(d \cdot t^2)$. 

    Plugging in $t = \omega(1)$, this shows that the smaller blow-up in our lower bound is tight. More interestingly, this provides the first trade-off between the parallelism and the total work required for boosting.
\end{itemize}

Our lower bound follows from a novel interpretation of parallel boosting as a variant of ``coin game''. The upper bound is inspired by the ``bagging'' technique in machine learning and draws a connection to differential privacy.
\end{abstract}

\section{Introduction}

Boosting is one of the most important contributions from the theory to the practice of machine learning. In the 1980s, Kearns and Valiant \cite{kearns1988thoughts, kearns1989crytographic} raised the fundamental question of boosting: Can every learning algorithm that, given any distribution, outputs a classifier with slightly nontrivial accuracy (i.e., weak learner) be ``boosted'' into an algorithm that outputs a classifier with arbitrarily high accuracy (i.e., strong learner)? This question was answered positively by Schapire \cite{schapire1990strength}. Since then, boosting has found numerous applications in different fields of machine learning \cite{abernethy2021multiclass, brukhim2022boosting, shen2022federated}. Boosting frameworks such as XGBoost \cite{chen2016xgboost, chen2015xgboost}, LightGBM \cite{ke2017lightgbm} have been popular tools in machine learning practice. 

The classical boosting algorithm \textsc{AdaBoost}, developed by Freund and Schapire \cite{freund1997decision}, follows an elegant strategy consisting of many adaptive rounds. In the initial round, it runs the weak learner on the uniform distribution over the training set to obtain the first classifier. For every following round, it adjusts the distribution by increasing the mass on the data points where the last classifier makes a mistake. Then, it obtains the next classifier by running the weak learner on this new distribution. Intuitively, this forces the weak learner to ``focus'' on correcting its previous mistakes. After sufficiently many rounds, \textsc{AdaBoost} aggregates the opinion of all classifiers by taking a majority vote. If the hypotheses in each round have an advantage $\gamma$ over random guess, $\tilde{O}(1 / \gamma^2)$ rounds are sufficient for the aggregated classifier to achieve a $99\%$ accuracy.

The power of the seemingly simple strategy comes from two sources: (1) running the weak learner on many different distributions and (2) the adaptivity in choosing these distributions. This is also a shared characteristic of modern gradient boosters \cite{chen2015xgboost,friedman2001greedy,ke2017lightgbm}. However, adaptivity comes with a high cost. In reality, weak learners are usually implemented by training the model over the given distribution. One call to the week learner could take days (especially for many modern deep neural networks). The adaptivity makes the algorithm inherently sequential: no matter how many computational resources are available, boosting will always blow up the already-long training time multiplicatively.

\paragraph*{Impossibility of significant parallelization.} Recently, Karbasi and Larsen \cite{karbasi2023impossibility} proved that a significant parallelization is impossible, showing that certain adaptivity is necessary for boosting. 

To set up minimal notation, a $\gamma$-weak learner is a weak learner that always outputs classifiers with at least $\gamma$ advantage over the random guess. We say a weak learner uses a concept class $\mathcal H$ if its output is always a concept in $\mathcal H$. Then, the main theorem of Karbasi and Larsen \cite{karbasi2023impossibility} reads: 

\begin{theorem}[Special Case of Theorem 1,  \cite{karbasi2023impossibility}, Informally Rephrased] \label{thm:kasper}
There is a universal constant $\alpha > 0$ such that the following is true for any weak-to-strong learner (boosting algorithm) $A$. Suppose $A$ achieves $0.99$ accuracy with every valid $\gamma$-weak ($0 < \gamma < \alpha$) learner $\mathcal W$ that uses a concept set of VC dimension $d$. Then, either $A$ interacts with the weak learner for at least $p \ge 1/\gamma$ rounds, or $A$ makes at least $t \geq \min(\tilde{\Omega}(\exp(d / \gamma)), \exp(\exp(d)))$ oracle calls to the weak learner in total.\footnote{In their paper, they proved that the loss is at least $\ell \geq \exp(-O(p \max(\gamma, \ln(tp) \gamma^2 / d)))$ when $t \leq \exp(\exp(d))$ Plug in $\ell \leq 0.01$ and $p \leq 1/ \gamma$, one gets that $t \geq \min(\tilde{\Omega}(\exp(d / \gamma)), \exp(\exp(d)))$ here.}
\end{theorem}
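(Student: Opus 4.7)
The plan is to prove an information-theoretic lower bound via Yao's minimax principle: we exhibit a distribution over hard learning instances and an adversarial (but valid) $\gamma$-weak learner, and show that every deterministic boosting algorithm with fewer than $1/\gamma$ rounds and fewer than $\exp(d/\gamma)$ total oracle calls fails to reach $0.99$ accuracy against them. First I would fix a sufficiently large training set $X = \{x_1, \ldots, x_m\}$ and draw the true labeling $y \in \{\pm 1\}^m$ uniformly at random. The concept class $\mathcal{H}$ of VC dimension $d$ would be constructed probabilistically (for example, via random patterns supported on random $d$-subsets of $X$, extended canonically), and the adversarial weak learner would, on any query distribution $D$, return some $h \in \mathcal{H}$ achieving $\Ex_{x \sim D}[y(x)h(x)] \geq \gamma$ while being otherwise minimally informative about $y$.

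Next I would carry out the core per-round analysis. Define a potential function $\Phi_r$ capturing the best correlation with $y$ that the algorithm can certify by any aggregation (e.g., weighted majority) of the hypotheses seen through round $r$, and prove the key inequality $\Phi_r - \Phi_{r-1} = O(\gamma)$ as long as the number of parallel queries in that round satisfies $p \leq \exp(d/\gamma)$. This is the ``coin game'' step: each returned hypothesis behaves like a slightly biased coin relative to the adversary's posterior on $y$, with bias tuned to reveal exactly $\gamma$ worth of correlation and no more. The VC dimension enters via Sauer's lemma, which caps the number of distinct labeling patterns of $\mathcal{H}$ on $X$ by $(em/d)^d$; this bottleneck forces any collection of $p \leq \exp(d/\gamma)$ hypotheses within a single round to be largely mutually redundant, so their joint contribution to $\Phi_r$ is still only $O(\gamma)$ rather than the $O(\gamma \sqrt{p})$ one might hope for under independence. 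Technically I would track a divergence (KL or $\chi^2$) between the true posterior on $y$ and the uniform prior, bounding its per-round increment.

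Once the per-round bound is established, the two branches of the theorem follow cleanly. If every round obeys $p \leq \exp(d/\gamma)$, then $\Phi_r = O(r\gamma)$ after $r$ rounds, and since a $0.99$-accurate classifier requires $\Phi_r = \Omega(1)$, we conclude $r = \Omega(1/\gamma)$. Otherwise some round uses $p > \exp(d/\gamma)$, so the total oracle budget $t \geq p$ already exceeds $\exp(d/\gamma)$, giving the second branch. The $\exp(\exp(d))$ case in the statement corresponds to a degenerate regime where $m$ becomes small relative to $d/\gamma$ and a different hard instance must be invoked.

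The main obstacle is the per-round potential bound in the parallel setting. Against $p$ adaptively chosen distributions $D_1, \ldots, D_p$ submitted within a single round, the adversary must return $h_1, \ldots, h_p \in \mathcal{H}$ that each individually satisfy the $\gamma$-advantage guarantee on its own $D_i$, while collectively contributing only $O(\gamma)$ to the potential. Balancing ``$\mathcal{H}$ is rich enough to honor the weak-learner contract on every possible $D$'' against ``$\mathcal{H}$ is structured enough to cap per-round information at $O(\gamma)$'' is the crux, and I expect it to require a careful probabilistic construction of $\mathcal{H}$ combined with a covering argument over its restriction to $X$ at resolution $\gamma$. Extending this analysis to handle the algorithm's adaptivity \emph{across} rounds --- via a chain-rule or martingale-style decomposition of the information that the algorithm has gathered --- is the final delicate piece.
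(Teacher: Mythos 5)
This theorem is cited from Karbasi and Larsen; the paper does not reprove it but only reviews its construction in Section~1.4. Their hard instance is a nested chain of uniformly random sets $X_0 \supset X_1 \supset \cdots \supset X_p$ with $|X_i| = (1-2\gamma)|X_{i-1}|$. In round $i$ the adversarial weak learner answers randomly inside $X_i$ and truthfully on $X_0 \setminus X_i$; spread-out queries then see a $\gamma$ advantage while all labels inside $X_i$ remain hidden. Taking $p = \Theta(1/\gamma)$ keeps $|X_p| \geq 0.01\,|X_0|$, forcing at least $1\%$ error. This is a combinatorial ``hide a shrinking set'' construction, not a potential or divergence argument.

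Your proposal takes a genuinely different route --- a coin-problem/divergence analysis with a per-round potential --- which is in fact the strategy this paper develops for its own sharper bound (Theorem~2), where it correctly yields $p = \Omega(1/\gamma^2)$ rounds because the per-round divergence increment is $O(\gamma^2)$. As written, your argument has a gap at the crucial step and is also internally inconsistent. You claim $\Phi_r - \Phi_{r-1} = O(\gamma)$ whenever $t \leq \exp(d/\gamma)$, deriving ``mutual redundancy'' from Sauer's lemma. But Sauer's lemma only bounds the number of labeling patterns of a class of VC dimension $d$ by $(em/d)^d$, and in the regime where $m$ is unbounded relative to $d$ and $\gamma$ (which the theorem must allow), this is vastly larger than $\exp(d/\gamma)$, so it supplies no redundancy at that threshold. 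In Karbasi--Larsen the $\exp(d/\gamma)$ threshold instead comes from counting how many random hypotheses a class of VC dimension $d$ can contain to fool concentrated queries over $\Theta(1/\gamma)$ rounds; concretely their bound $\ell \geq \exp(-O(p\max(\gamma, \ln(tp)\gamma^2/d)))$ yields it by solving $\ln(tp)\gamma^2/d \approx \gamma$ at $p = 1/\gamma$, which has nothing to do with Sauer. Separately, you say you would ``track a divergence (KL or $\chi^2$)'' while also asserting an $O(\gamma)$ per-round increment: a bias-$\gamma$ coin toss contributes $O(\gamma^2)$ to either divergence, not $O(\gamma)$. If you use divergence, you should be proving the stronger $1/\gamma^2$ round bound, not this theorem; if you track raw correlation to get $O(\gamma)$ per round, you still need a mechanism --- other than Sauer --- to cap the contribution of up to $\exp(d/\gamma)$ parallel queries. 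Either way, the $\exp(d/\gamma)$ threshold has to be derived from an explicit probabilistic construction of $\mathcal{H}$, as both Karbasi--Larsen and this paper do.
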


As AdaBoost takes $\tilde{O}(1 / \gamma^2)$ rounds, this shows that any algorithm that quadratically parallelizes AdaBoost must make $\exp(d / \gamma)$ calls in each round. The authors of \cite{karbasi2023impossibility} concluded that \emph{``The classic algorithms, such as AdaBoost, use $O(\gamma^{-2} \ln m)$ rounds.\footnote{Again, here $m$ is the size of the data set.} Thus it is conceivable that boosting can be somewhat parallelized. We leave this as an exciting direction for future research.''}, which still leaves hope for some mild parallelization of boosting.

\subsection{Our Results}

\paragraph*{Impossibility of slight parallelization} Answering this hope in the negative, we prove that slight parallelization is impossible without an exponential number of calls to the weak learner each round. 

We note that, in the prior work of Long and Servedio~\cite{NIPS2011_b7ee6f5f}, they studied a more restrictive class of parallel boosting algorithms. For this class of boosters, they proved a stronger lower bound saying that $\Omega(1 / \gamma^2)$ rounds are necessary, even with an unbounded number of calls each round. (See \Cref{sec:related} for a more detailed discussion.) 

\begin{theorem} [Special Case of \Cref{theo:main-lower-bound}]  \label{thm:us}
There is a universal constant $\alpha > 0$ such that the following is true for any weak-to-strong learner (boosting algorithm) $A$. Suppose $A$ achieves $0.99$ accuracy with every valid $\gamma$-weak ($0 < \gamma < \alpha$) learner $\mathcal W$ that uses a concept set of VC dimension $d$. Then, either $A$ interacts with the weak learner for at least $p \ge 1/\gamma^2$ rounds, or $A$ makes at least $t\ge \exp(\Omega(d))$ oracle calls to the weak learner in total.
\end{theorem}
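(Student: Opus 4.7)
The plan is to prove the lower bound via an adversarial construction, interpreting parallel boosting as a variant of the classical coin problem in the spirit of \cite{karbasi2023impossibility}, but with a sharper per-round analysis that replaces their bias-based potential by a variance-based one. The adversary will simultaneously play the role of the training distribution, the labels, and the weak learner, so that no boosting algorithm can achieve $0.99$ accuracy without either many rounds or an exponential number of parallel queries per round.

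Concretely, I would fix a training set with independent uniformly random labels together with an independent test point carrying a uniform label, and a carefully chosen concept class $\calH$ of VC dimension $d$ and size $\exp(\Theta(d))$ (for instance, chosen via a random or pseudorandom design so that $\calH$ is ``generic'' and every queried distribution admits some $h \in \calH$ with $\Ex_D[h(x) y(x)] \ge 2\gamma$). On a fresh test point, however, every $h \in \calH$ should be nearly uncorrelated with the random label. The adversarial weak learner simply returns such an $h$ for each query distribution.

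The core step is to bound the KL (equivalently $\chi^2$) divergence between the booster's transcript when the labels are the random $y$ and the transcript when the labels are independent of the queries. For a single hypothesis with bias $\gamma$, the chi-squared contribution is $O(\gamma^2)$; this is the standard coin-problem fact and is the source of the $1/\gamma^2$ improvement over \cite{karbasi2023impossibility}, who effectively only used an $O(\gamma)$ bound. The key technical novelty I anticipate needing is to show this $O(\gamma^2)$ bound survives per round in the parallel setting, i.e., that the $t$ hypotheses returned in a round jointly contribute only $O(\gamma^2)$ (plus lower-order terms) as long as $t \le \exp(\Omega(d))$. The intuition is that $t$ hypotheses all drawn from a class of size $\exp(O(d))$ cannot extract more independent randomness than a single ``best-in-class'' query from $\calH$ would. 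Summing over $p$ rounds by subadditivity of KL, the total divergence is $O(p \gamma^2)$, and a Le Cam / data-processing argument then shows the final classifier's advantage on the fresh test point is at most $O(\sqrt{p \gamma^2})$; demanding advantage $\Omega(1)$ forces $p \ge \Omega(1/\gamma^2)$ unless $t \ge \exp(\Omega(d))$.

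The main obstacle is the per-round divergence bound in the parallel setting. Showing that $t$ parallel queries yield only $O(\gamma^2)$ extra divergence rather than $O(t \gamma^2)$ requires a minimax / distributional argument that exploits the bounded richness of $\calH$, together with an adversary strategy that ``obfuscates'' the label signal across all $t$ answers within the round. A secondary subtlety is guaranteeing the $\gamma$-advantage against every distribution the booster might query, which constrains the construction of $\calH$ and likely needs a random or pseudorandom design plus a union bound to handle all queries the booster could adaptively produce.
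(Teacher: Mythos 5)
Your high-level plan is in the right spirit: the paper does build an adversarial weak learner, does reduce to the coin problem, and does get a per-round information contribution of $O(\gamma^2)$. But you correctly identify as the ``main obstacle'' the step of showing that $t$ parallel answers in a round jointly contribute only $O(\gamma^2)$, and your proposal does not actually resolve it. Worse, the intuition you offer---that $t$ hypotheses from a class of size $\exp(O(d))$ ``cannot extract more independent randomness than a single best-in-class query''---is false as stated for a general adversary: if $\calH$ contained $\exp(O(d))$ hypotheses each carrying different label information, a clever booster could in principle extract $\Omega(d)$ bits per round. Likewise ``a random or pseudorandom design so that every queried distribution admits some $h\in\calH$ with advantage $2\gamma$'' is in tension with ``on a fresh test point every $h\in\calH$ is nearly uncorrelated with the label'': a single label-independent random hypothesis has advantage roughly $O(1/\sqrt{m})$ on a spread-out query distribution, far short of $\gamma$, so a purely generic label-independent $\calH$ cannot serve as a valid $\gamma$-weak learner for all queries.

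The paper's resolution is a concrete construction, not a minimax/obfuscation argument. For each round $i$ it \emph{pre-draws}, independently of all future queries, one biased hypothesis $a^{(i)}$ whose entry $a^{(i)}(x_j)$ is a $\gamma$-biased coin toward $c(x_j)$, plus $2^{\Theta(d)}$ purely random label-independent hypotheses $r^{(i)}_j$; the weak learner answers each query with the first valid hypothesis from this pre-drawn pool (prioritizing earlier rounds). The case analysis is the crux: if the query distribution is \emph{spread} (in the sense that its top-$d$ mass plus $\sqrt{d}$ times the tail $\ell_2$-norm is $O(\gamma)$), then by Hoeffding the fixed $a^{(i)}$ already has advantage $\gamma$ with probability $1-\exp(-\Omega(d))$; if the query is \emph{concentrated}, then by a small-ball / anti-concentration bound one of the $2^{\Theta(d)}$ label-independent random hypotheses already has advantage $\gamma$ with probability $1-\exp(-\Omega(d))$. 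A union bound over $t\le\exp(\Omega(d))$ queries per round shows that, with high probability, every query is answered from the fixed pool of rounds $\le i$ and never needs the ground truth $c$. Conditioned on this event, the only label-correlated information the booster ever sees from round $i$, regardless of how many queries it makes, is the single coin toss $a^{(i)}(x)$ for each data point $x$. That is why the per-round KL is $O(\gamma^2)$ \emph{exactly}, not just heuristically. Your Le Cam / data-processing step at the end is then essentially the paper's Lemma on the coin problem, so that part is fine; the missing piece is the spread/concentrated split and the design that makes all $t$ answers in a round factor through a single pre-drawn biased hypothesis.
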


We proved this result via a connection to the coin problem, which we will discuss in detail in \Cref{sec:tech}.

\paragraph*{Tradeoff between parallelisms and total work.} Comparing \Cref{thm:kasper} and \Cref{thm:us}, one may notice that the lower bounds for $t$ are $\min(\exp(d / \gamma), \exp(\exp(d)))$ and $\exp(d)$ respectively. The $\exp(d)$ bound in \Cref{thm:us} seems weaker. A priori, it is not clear whether this is a technicality of our proof or an inherent nature of the problem. 

To answer this, we accompany our lower bound with a new boosting algorithm with fewer rounds of interaction, which shows that \Cref{thm:us} is nearly tight regarding the dependence on $\gamma$. 

\begin{theorem} [Informal version of \Cref{theo:upper-bound-formal}]\label{thm:upper-bound}
There exists a boosting algorithm, that boosts any $\gamma$-weak learner $\calW$ with a hypothesis class of VC dimension $d$ to a strong learner (with accuracy $0.99$), using $\tilde{O}(1 / (\gamma^2 R))$ rounds of interaction and $\exp(\tilde{O}(d R^2))$ calls to $\calW$ per round.

Specifically, when $R = \omega(1)$ and $R \ll \log (1 / \gamma)$, it uses $o(1 / \gamma^2)$ rounds and $\exp(\tilde{O}(d))$ queries per round.
\end{theorem}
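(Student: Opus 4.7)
The plan is to replace each adaptive round of standard \textsc{AdaBoost} by a single non-adaptive ``super-round'' that internally simulates $R$ consecutive \textsc{AdaBoost} steps. Concretely, we construct a ``batched weak learner'' $\calW^{\sharp}$ which, on input a distribution $D$ over the training set, issues $\exp(\tilde{O}(dR^{2}))$ non-adaptive queries to $\calW$ and returns a hypothesis with advantage $\Omega(\gamma\sqrt{R})$ on $D$. Feeding $\calW^{\sharp}$ into the standard \textsc{AdaBoost} template then gives $0.99$ training accuracy after $\tilde{O}(1/(\gamma\sqrt{R})^{2}) = \tilde{O}(1/(\gamma^{2}R))$ outer rounds, which is exactly the quantity claimed in the theorem.

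The construction of $\calW^{\sharp}$ proceeds by enumerating, non-adaptively, the entire tree of possible length-$R$ internal \textsc{AdaBoost} traces starting from $D$. By the Sauer--Shelah lemma, the restriction $\mathcal{H}|_{S}$ of the hypothesis class to the $m$-point training set has cardinality $\binom{m}{\le d} = \exp(\tilde{O}(d))$, so the full depth-$R$ tree has at most $\exp(\tilde{O}(dR))$ distinct nodes; for each such node we deterministically reconstruct the internal reweighted distribution $D_{j}$ and issue a query to $\calW$ on it. A bagging-style amplification over the tree then yields a majority-vote hypothesis whose advantage scales like $\Omega(\gamma\sqrt{R})$, and a uniform-concentration argument (described next) absorbs an extra multiplicative $R$ in the exponent, giving the stated $\exp(\tilde{O}(dR^{2}))$ queries per outer round.

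The uniform concentration step is borrowed from the literature connecting \emph{bagging} with \emph{differential privacy}: each internal step of the simulation is performed by a stable sub-procedure whose output is essentially insensitive to the detailed randomness of earlier internal steps. The stability guarantee, applied inductively over the $R$ internal steps, lets a single batch of non-adaptive queries serve as a faithful stand-in for $R$ genuinely adaptive steps; quantifying the required stability is what contributes the additional $R$ factor in the exponent. For generalization, the final output is a majority of $\tilde{O}(1/(\gamma^{2}R))$ hypotheses, each itself a small combination of $\mathcal{H}$-functions, whose combined VC dimension is $\mathrm{poly}(d, 1/\gamma, R)$; a standard uniform-convergence argument then lifts $0.99$ training accuracy to $0.99$ test accuracy with a polynomial-size training set.

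The main obstacle is the aggregation step: rigorously showing that the tree of non-adaptive queries can be consolidated into a hypothesis with advantage $\Omega(\gamma\sqrt{R})$, matching what $R$ truly adaptive \textsc{AdaBoost} rounds would produce. In particular, we must handle arbitrary, potentially adversarial weak learners whose outputs need not come from a deterministic rule, and we must ensure that the bagging/DP-based amplification survives the non-adaptive simulation with only an $R^{2}$ blow-up in the exponent rather than a larger polynomial in $R$. The differential-privacy viewpoint is essential here, since it supplies the uniform concentration across the exponentially many tree paths at the tight cost claimed.
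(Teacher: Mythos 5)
Your proposal takes a genuinely different route from the paper, and it contains a gap at its central step.

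The paper never amplifies the weak learner's advantage. Its algorithm runs a single \textsc{AdaBoost} process with the same $K=\tilde{O}(\gamma^{-2})$ internal steps as usual, and the simulated weak learner it uses even has a slightly \emph{worse} advantage of $\gamma/4$. What it does instead is group these $K$ steps into $K/R$ super-rounds: at the start of each super-round it subsamples $Q$ sets from $D_{kR}$, queries $\calW$ on all of them in one parallel batch, and then reuses this pool of hypotheses to serve the next $R$ distributions $D_{kR},\dots,D_{(k+1)R-1}$. The differential-privacy tool (max-divergence per step plus advanced composition over $n$ samples) is used to show that subsampling from $D_{kR}$ remains representative of $D_r$ for all $r$ in the block, which is why a fixed pool suffices for $R$ consecutive steps; the $R^2$ in the exponent comes directly from $\hat\eps \approx n(\gamma R)^2 = dR^2$ with $n = \Theta(d/\gamma^2)$. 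The final generalization argument is via Breiman's min-margin bound, not a crude VC bound on the composite class.

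Your plan instead builds a "batched weak learner" $\calW^\sharp$ with amplified advantage $\Omega(\gamma\sqrt{R})$ and feeds it to outer \textsc{AdaBoost}. The critical unproven claim is that $R$ rounds of internal boosting (or the majority over your depth-$R$ tree) yield a single hypothesis with advantage $\Omega(\gamma\sqrt{R})$ on the given distribution $D$. The standard \textsc{AdaBoost} training-error bound $(1-4\gamma^2)^{R/2}$ is vacuous in exactly the regime $R \ll 1/\gamma^2$ that this theorem targets: it stays above $1/2$ there and certifies no advantage at all. Getting a $\gamma\to\gamma\sqrt{R}$ amplification against an adversarial weak learner requires a qualitatively different argument (something in the spirit of boost-by-majority or hardcore-set amplification), and nothing in your writeup supplies it. Moreover, the way you invoke bagging/DP does not close that gap: the tree-enumeration simulation of adaptive queries is already deterministic (enumerate all $\exp(\tilde{O}(dR))$ reachable internal distributions, query them all, then trace the realized path using the returned answers), so it needs no stability argument. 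The "extra $R$ in the exponent" that you attribute to a DP-style uniform-concentration step is asserted rather than derived, and as written it does not correspond to any identifiable lemma. If the $\gamma\sqrt{R}$ amplification step were made rigorous, this would be an interesting alternative route and might even improve the per-round query count to $\exp(\tilde{O}(dR))$; as it stands, that step is the missing core of the proof.
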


In the work of Karbasi and Larsen \cite{karbasi2023impossibility}, they gave a one-round boosting algorithm by essentially enumerating all sparsely supported distributions and training weak learners on them in parallel. In comparison, our few-round boosting algorithm is inspired by a practical technique in machine learning, bagging (also known as bootstrap aggregation). Interestingly, similar ideas of using bagging to help parallelization of boosting have been experimentally explored by machine learning researchers (e.g., Figure 4 of \cite{yu2001parallelizing}, Algorithm 2 of \cite{palit2011scalable}, Algorithm 2 of \cite{lozano2005algorithms}). They gave experimental evidence that these algorithms of similar spirits work well in practice. To the best of our knowledge, our work is the first to give a theoretical analysis of such parallelization. Our analysis uses differential privacy tools, which we will discuss in more detail in \Cref{sec:tech}.

We also note that the idea from \Cref{thm:us} can be adapted to give a smooth trade-off on the lower bound side. 

\begin{theorem} [Informal version of \Cref{theo:trade-off-lower-bound}] \label{thm:lowerbound} 
There is a universal constant $\alpha > 0$ such that the following is true for any weak-to-strong learner (boosting algorithm) $A$. Suppose $A$ achieves $0.99$ accuracy with every valid $\gamma$-weak ($0 < \gamma < \alpha$) learner $\mathcal W$ that uses a concept set of VC dimension $d$. For every parameter $R\le \frac{1}{\gamma^2}$, either $A$ interacts with the weak learner for at least $p \ge \frac{1}{R}\frac{1}{\gamma^2}$ rounds, or $A$ makes at least $t\ge \exp(\Omega(d R))$ oracle calls to the weak learner in total.
\end{theorem}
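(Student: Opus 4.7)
I would generalize the coin-game argument behind \Cref{thm:us} (the $R=1$ case) by tuning the adversary so that, per round, the booster's posterior over the hidden target can shift by at most $O(dR)$ bits, regardless of the number of queries $t$, as long as $t\le\exp(\Omega(dR))$. Combined with a coin-problem-style lower bound saying that $\Omega(d/\gamma^2)$ bits of information about the target are required for $0.99$ accuracy, this would force $p\ge\Omega(1/(R\gamma^2))$, recovering \Cref{thm:us} at $R=1$ and the Karbasi--Larsen bound at $R=1/\gamma$.

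\paragraph*{Key Steps.}
\emph{Step 1 (Adversary).} Fix a shattered set $\{x_1,\ldots,x_d\}$, draw a uniformly random target $\tau\in\bits^d$, and label the training data by $\tau$. For any query distribution $D$, the adversarial weak learner returns a hypothesis drawn from a ``response pool'' of size $\exp(\Theta(dR))$ that (a) satisfies the $\gamma$-advantage guarantee on $D$ against $\tau$ and (b) is consistent with $\tau$ only through a $\gamma$-biased signal on each coordinate.
\emph{Step 2 (Per-round information bound).} Fix a round in which the booster issues $t$ queries based only on prior rounds. Show that the mutual information between the transcript of this round and $\tau$, conditioned on the history, is $O(dR)$, provided $t\le\exp(\Omega(dR))$. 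Crucially, although each individual response carries many bits of raw entropy, the $t$ responses are jointly generated by a single adversary ``strategy'' drawn from a pool of size $\exp(\Theta(dR))$; hence their joint entropy, and therefore the information they carry about $\tau$, is bounded independently of $t$.
\emph{Step 3 (Coin-problem lower bound).} Argue that outputting a hypothesis with accuracy $\ge 0.99$ essentially requires identifying $\tau$, and distinguishing a uniform $\tau$ from its $\gamma$-biased signals needs $\Omega(1/\gamma^2)$ effective samples per coordinate, i.e.\ $\Omega(d/\gamma^2)$ bits of information about $\tau$ in total. This step is a coordinate-wise coin-problem application, and should follow the template of the $R=1$ case.
\emph{Step 4 (Combine).} Summing the per-round bound over $p$ rounds gives
\[
p\cdot O(dR)\;\ge\;\Omega\!\left(\frac{d}{\gamma^2}\right)
\quad\Longrightarrow\quad
p\;\ge\;\Omega\!\left(\frac{1}{R\gamma^2}\right),
\]
unless the total query count already satisfies $t\ge\exp(\Omega(dR))$.

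\paragraph*{Main Obstacle.}
The most delicate step is Step 2: one must design a ``strategy pool'' of size $\exp(\Theta(dR))$ that is simultaneously (i) rich enough to contain a valid $\gamma$-advantage response to every query distribution the booster might ask, thereby maintaining the weak-learner guarantee against an adaptive booster, and (ii) structured enough that the joint responses across all $t$ queries leak only $O(dR)$ bits about $\tau$. The naive per-query information bound of $O(\gamma^2)$ would only yield $p\ge d/(t\gamma^2)$, which is exponentially weaker; the saving comes entirely from the fact that $t$ parallel queries are answered by one ``global'' strategy rather than by independent $\gamma$-biased draws. Porting the coin-game machinery behind \Cref{thm:us} into this parametric form---and verifying that the adversary remains a valid $\gamma$-weak learner while this joint-entropy bound stays sharp---is where the main technical work lies.
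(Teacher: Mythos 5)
Your high-level strategy---bound the per-round information flow, sum over rounds, and compare to a coin-problem lower bound---is in the right spirit, and you correctly flag Step~2 as the crux. But the concrete accounting in Steps~2 and~3 does not hold up, and the two errors cancel only by coincidence. In Step~2, the bound ``mutual information $\le \log(\text{pool size}) = O(dR)$'' is not sound: the hypothesis pool in the hard instance is a random object whose Shannon entropy far exceeds the logarithm of its cardinality, and the quantity that matters is the information the pool carries \emph{about $\tau$}, which is governed by how the pool is correlated with $\tau$, not by its size. (In the paper's construction, the pool in each round contributes $O(R\gamma^2)$ bits per unseen coordinate, because only the $R$ biased hypotheses in that round's block are correlated with $\tau$ at all; the exponentially many purely random hypotheses contribute zero.) In Step~3, ``$\Omega(1/\gamma^2)$ effective samples per coordinate, i.e.\ $\Omega(d/\gamma^2)$ bits'' conflates samples with bits: $\Theta(1/\gamma^2)$ independent $\gamma$-biased observations of a coordinate carry $\Theta(1)$ bits of Shannon information, so the total information needed is $\Theta(\text{number of unknown coordinates})$, not $\Theta(d/\gamma^2)$. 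With the corrected quantities the chain of inequalities in Step~4 no longer closes as written.

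More importantly, your plan does not supply the mechanism that actually lets $t$ scale as $\exp(\Omega(dR))$. The paper reuses the $R=1$ construction verbatim but with $pR$ independent stages $\calH^{(1)},\dots,\calH^{(pR)}$, each of size $\exp(\Theta(d))$ and each containing exactly one $\gamma$-biased hypothesis $a^{(j)}$ together with $\exp(\Theta(d))$ purely random hypotheses. In round $i$ the booster has access to the fresh block $\calH^{(iR+1)},\dots,\calH^{((i+1)R)}$; a fixed query fails to be answerable by a fixed stage with probability $\exp(-\Omega(d))$ (this is where the spread/concentrated dichotomy of \Cref{sec:low-fail}---which your plan does not mention---is needed), and since the $R$ stages are mutually independent and the query is independent of all of them, the probability of failing against \emph{all} $R$ stages is $\exp(-\Omega(dR))$. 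This is what permits a union bound over $\exp(\Omega(dR))$ queries per round, not a ``strategy pool of size $\exp(\Theta(dR))$.'' The loss lower bound then follows from the same per-coordinate coin problem as in the $R=1$ case (\Cref{loss-lowerbound}), now with $pR$ biased tosses per coordinate, yielding $\exp(-O(pR\gamma^2))$; no mutual-information or Fano-style computation is required.
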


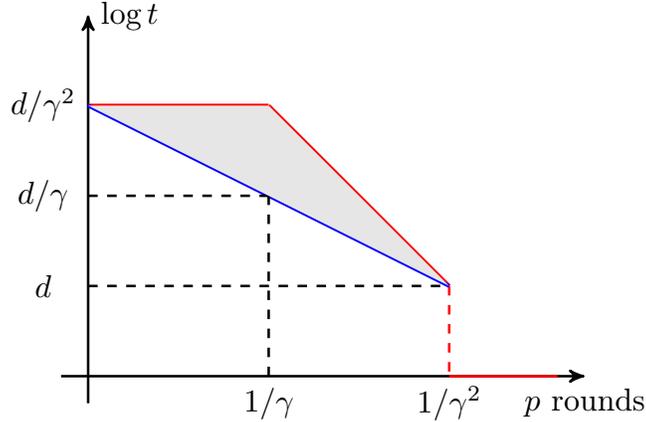
\begin{figure}[h]
\centering
\scalebox{1.2}{
\begin{tikzpicture}[
    thick,
    >=stealth',
    dot/.style = {
      draw,
      fill = white,
      circle,
      inner sep = 0pt,
      minimum size = 4pt
    }
  ]
  \coordinate (O) at (0,0);
  \draw[->] (-0.3,0) -- (5.5,0) coordinate[label = {below:\small $p$ rounds}] (xmax);
  \draw[->] (0,-0.3) -- (0,4) coordinate[label = {right:\small $\log t$}] (ymax);
  \draw [red, very thick] (0,3) -- (2,3);
  \draw [red, very thick] (2,3) -- (4,1);
  \draw [dashed, red] (4,1) -- (4,0);
  \draw [red, thick] (4,0) -- (5.2,0);
  \draw [blue, very thick] (0,3) -- (4,1);
  \node at (2,-0.3) {\small $1 / \gamma$};
  \node at (4,-0.3) {\small $1 / \gamma^2$};
  \draw [dashed] (2,0) -- (2,2);
  \draw [dashed] (0,2) -- (2,2);
  \draw [dashed] (0,1) -- (4,1);
  \node at (-0.5,3) {\small $d / \gamma^2$};
  \node at (-0.5,2) {\small $d / \gamma$};
  \node at (-0.5,1) {\small $d$};
  \fill[gray!20] (0,3) -- (2,3) -- (4,1);
\end{tikzpicture}}
\caption{Tradeoff between rounds of interaction $p$ and number of parallel queries in a single round $t$ (from \Cref{thm:upper-bound} and \Cref{thm:lowerbound} (ignoring all the log factors)). The red line is the upper bound and blue line is the lower bound. There is a phase transition when $p \approx 1 / \gamma^2$. The gray area indicates the current gap in the upper and lower bounds.} 
\label{fig:tradeoff}
\end{figure}

Note when $p = 1 / \gamma$, \Cref{thm:lowerbound} gives a lower bound of $t \geq \exp(d / \gamma)$ which improves over the $t \geq \min(\exp(d / \gamma), \exp(\exp(d)))$ lower bound in \Cref{thm:kasper} by Karbasi and Larsen \cite{karbasi2023impossibility}, showing that the $\exp(\exp(d))$ term is merely an artifact of the previous approach. This is a bonus from our coin problem approach. 

All together, our knowledge of the cost for parallelizing boosting can be summarized as \Cref{fig:tradeoff}. Closing the gap between the lower and upper bound for the $p\approx \frac{1}{\gamma}$ regime remains an interesting open problem.

\subsection{Related Works} \label{sec:related}

\paragraph*{Prior works on Parallel Boosting.} 

Numerous works in literature have studied the amount of resources required for boosting. The earlier work of Freund \cite{Freund95} showed that a total number of $\Omega(1/\gamma^2)$ calls to the weak learner is required. An important prior work of Long and Servedio \cite{NIPS2011_b7ee6f5f} considered the complexity of parallel boosting and proved that $\widetilde{\Omega}(1/\gamma^2)$ rounds of interaction with the weak learner is required, \emph{regardless} of the number of calls in each round. While this appears quantitatively stronger than our results, we note that the model considered in their work puts a significant restriction on the boosting algorithms, which makes the results weaker. 

Specifically, they assume that the query distributions of the booster is always derived through ``filtering''. Namely, it can only adjust the weight of any input $x\in X$ based on the classifications of $x$ given by the hypotheses from the weak learner in previous rounds. Under this assumption, they show that even an oblivious weak learner, whose output is independent of the query distribution, is sufficient to ``fool'' the booster. In contrast, our approach does not bind the booster to such constraints. Because of the assumption, their model cannot capture the boosting algorithms that query randomized distributions, such as our algorithms or the algorithms of Karbasi and Larsen \cite{karbasi2023impossibility}. Note that the ability to query randomized distributions is the key feature that allows us to achieve a tradeoff between query and round complexity and break their lower bound.

On the algorithm side, there are several boosting algorithms that make parallel calls to the weak learner, including the literature on boosting using decision tree learning \cite{kearns1996boosting} and branching programs \cite{mansour2002boosting, kalai2003boosting, long2005martingale, long2008adaptive}. Despite making parallel queries, all these algorithms still need $\Omega(1/\gamma^2)$ rounds of interaction with the weak learner. This is inherent because they are ``filtering'' algorithms, which are subject to the aforementioned lower bound by Long and Servedio \cite{NIPS2011_b7ee6f5f}.

\paragraph*{Differential Privacy and Bagging.} The idea of taking subsamples, performing computation, and finally aggregating the results is perhaps ubiquitous in computer science. For the machine learning side, the bagging technique has been investigated by numerous empirical and theoretical works and has a rich literature. For one recent example, it was shown in \cite{Larsen23-bagging-optimal} that the bagging technique, coupled with the Empirical Risk Minimization algorithm, gives a PAC-learning algorithm with optimal sample complexity. For the differential privacy side, the so-called ``privacy amplification by sampling'' \cite{BalleBG18-privacy-subsampling} and the ``sample-and-aggregate'' \cite{NissimRS07-smooth-sampling,ThakurtaS13-sample-aggregate} framework have been the main workhorses behind many exciting developments.

Our work contributes a new perspective on the power of subsampling in machine learning through an information-theoretical technique first developed and popularized in the Differential Privacy community, namely the advanced composition theorem. In our work, the privacy of the data set is not a primary concern, and our result is not directly comparable to the line of DP works. We hope that our work can inspire further investigation into the power of sample-and-aggregation and its interplay with ML and DP. 

\subsection{Our Techniques} \label{sec:tech}

\paragraph*{Review of the previous lower bound.} Before introducing our construction, it is instructive to review the previous lower bound of \Cref{thm:kasper}, which follows from a recursive set hiding structure. The main idea of \cite{karbasi2023impossibility} is as follows: Let $X_0$ be the set of all data points, and say we finally care about the accuracy on uniform distribution over $X_0$. The weak learner samples a chain of sets $X_0 \supset X_1 \supset X_2 \supset X_3 \supset \cdots \supset X_{p}$ with $|X_i| = (1 - 2\gamma) |X_{i - 1}|$ uniformly at random. Then, a random hypothesis $c: X_0\to \{\pm 1\}$ is drawn as the ground truth, and the goal of the strong learner is to approximate $c$ as well as possible. Roughly speaking, for the $i$-th round of queries, the weak learner will answer randomly within $X_i$ and truthfully outside of it (on $X_0 \setminus X_i$). The idea of the construction is intuitive: before the $i$-th round, the boosting algorithm knows nothing about $X_i$ and the correct labels of data points in $X_{i - 1}$. After round $i$, the boosting algorithm may learn the set $X_i$, but it has no information about the correct labels of data points in $X_i$. 

Since the boosting algorithm does not know what $X_i$ is before round $i$, ideally, the distribution $\calD$ it picks in the round $i$ should have $1 - 2 \gamma$ mass within $X_i$ and $2\gamma$ mass in $X_0 \setminus X_i$. Hence, one should expect it to have accuracy roughly $(1 - 2\gamma) \cdot \frac{1}{2} + 2 \gamma \cdot 1 = \frac{1}{2} + \gamma$. This makes it a valid weak learner. Taking $p = \Theta(1 / \gamma)$, one can make sure that $|X_{p}| \geq 0.01|X_0|$ so that the boosting algorithm, which has no information about correct labels in $|X_i|$, cannot have accuracy better than $0.99$. 

The reason why this approach stuck at $1 / \gamma$ rounds is that, in order to ensure a $\gamma$-advantage over random guess, the weak learner can conceal at most $1 - 2\gamma$ fraction of the correct labels, then naturally, after roughly $1 / \gamma$ rounds, most correct labels are released to the boosting algorithm. 

\medskip\noindent\textbf{Connection to Coin Problem.}
We overcome this barrier by drawing an interesting connection to the coin problem. We think of the weak learner as having $|X_0|$ many random coins. For any $x \in X_0$, if the correct label is $1$, then the corresponding coin is $\gamma$-biased towards head, and if the true label is $0$, the coin is $\gamma$-biased towards tail. Loosely speaking, in each round $i$, the weak learner tosses every coin \emph{exactly once} and makes up a hypothesis $h^{(i)}:X_0\to \{0,1\}$ that records the toss results. In the ideal case, every distribution $\cal D$ picked by the boosting algorithm is spread-out enough. Then by concentration among the coin tosses, with exponentially small probability, each query $\cal D$ can be answered by $h^{(i)}$ with an $\Omega(\gamma)$ advantage.

In this way, the weak learner gives away only $O(\gamma^2)$ bits of information on the correct label on \textit{every} data point, while the prior construction reveals a $2\gamma$ fraction of correct labels in every round. Consequently, we are able to provide a tight $\Omega(1 / \gamma^2)$ lower bound on the number of rounds.

We note that in this ideal case, this coin problem approach is similar to that of the previous lower bounds \cite{Freund95, NIPS2011_b7ee6f5f}. The difference lies in the case that the query distribution $\cal D$ is sparse (i.e., not spread-out enough). We construct a family of hypotheses ${\cal H}^{(i)}$ consisting of $h^{(i)}$ and a small number of random hypotheses. (This construction has the same spirit as the approach by Karbasi and Larsen \cite{karbasi2023impossibility}.) In the case where $\cal D$ is sparse, we show that the weak learner can answer the query with $\Omega(\gamma)$ advantage using one of the random hypotheses in ${\cal H}^{(i)}$. We also slightly simplified their approach by a more fine-grained division between the sparse and spread-out cases.

\paragraph*{Connection to Bagging and Differential Privacy} 
Bagging is the technique of creating many data sets by independently sampling the training set. Our algorithm itself is very simple (See \Cref{algo:intro}). The key idea is to simulate a weak learner with a small set of hypotheses $\calH_k$ obtained by bagging. 

\begin{algorithm}[h]
\DontPrintSemicolon
    \caption{Sketch of the Parallel Boosting Algorithm}\label{algo:intro}
    Initialize the distribution $D$ to be uniform over the training set.\\
    \For{$k \gets 0$ \KwTo $\lceil K/R \rceil-1$}{
        \tcp{Bagging Step.}
        \For{$q \gets 1$ \KwTo $Q$}{    
            Subsample $n$ elements from the training set according to $D$.\\
            $D_{k,q} \gets$ the uniform distribution over these $n$ elements.\\
        }
        $\calH_k \gets$ hypotheses obtained by running weak learner on $D_{k,q}$ for each $q \in [Q]$ separately. \label{line:call}\\
        \tcp{Boosting Step.}
        \For{$r \gets kR$ \KwTo $(k+1)R - 1$}{
            $h_r \gets \arg \min_{h \in \calH_k} \Loss_D(h)$.  \tcp{Here $h_r$ is easy to find because $|\calH_k| \leq Q$.} 
            Update $D$ as if the weak learner has outputted $h_r$ on $D$. \label{line:upd}
        }
    }
    \Return Aggregation of all $K$ different $h_r$'s. 
\end{algorithm}

For its analysis, we want to argue that bagging helps the algorithm be more robust against a few distribution updates. This is when \emph{advanced composition}, a tool from differential privacy (see, e.g.~\cite{dwork2010boosting}), comes in handy. Recall that bagging generates many new datasets by subsampling. Loosely speaking, we will use it to show that the mechanism for generating one such new dataset satisfies approximate DP.  (Looking ahead, the input to the mechanism will be the accumulated loss on every data point.) Hence intuitively, for just a few updates, they will not affect the result of bagging much. With some effort, we can show that at least one of the hypotheses in $\calH_{k}$  will have $\gamma$ advantage over random guess. This guarantees that at \Cref{line:upd}, we successfully simulated a valid weak learner by our parallel calls to $\calW$ at \Cref{line:call}.

\section{Preliminaries}

\paragraph{Notation.} $\R_+^n$ denotes the set of all non-negative real numbers. $\Ber(p)$ denotes the Bernoulli distribution with parameter $p$. Namely, $X\in \Ber(p)$ takes value $1$ with probability $p$ and value $0$ otherwise. For a finite set $A$, we use $x\sim A$ to denote a random variable $x$ distributed uniformly at random from $A$. We usually use $\calX$ to denote the universe. Then, for every subset $S\subseteq \calX$, we denote by $\overline{S}\coloneqq \calX\setminus S$ the complementary set of $S$.

\subsection{PAC Learning and Boosting}

This paper studies \emph{boosting algorithms}. Namely, algorithms that make oracle queries to a \emph{weak learner} and convert it into a \emph{strong learner}.

\paragraph*{PAC Learning.} We consider the task of \emph{binary classification} and employ the PAC learning framework, which we briefly review now. We use $\calX$ to denote a finite domain of inputs. The binary labels are denoted by $\{ \pm 1\}$. Let $\calH\subseteq \{ h : \calX\to \{\pm 1\} \}$ be a class of hypotheses. The VC dimension of $\calH$, denoted by $\VCDim(\calH)$, is the maximum $d\in \mathbb{N}$ such that there exists a subset set $S\subseteq \calX$ of size $d$ that is \emph{shattered}\footnote{This means the projection of $\mathcal{H}$ onto S consists of all $2^{|S|}$ possible hypotheses.} by $\calH$. A learning task is usually described by a distribution $\calD$ over $\mathcal{X}\times \{\pm 1\}$, and the goal of learning is to find a hypothesis $\hat{h}$ (not necessarily from $\mathcal{H}$) minimizing the loss function on $\calD$, which is defined as $\Loss_\calD(\hat{h})\coloneqq \Ex_{(x,y)\sim \calD}[h(x)\ne y]$. The distribution $\mathcal{D}$ is called \emph{realizable}, if there is a hypothesis $h\in \mathcal{H}$ such that $\Loss_\calD(h) = 0$.

\medskip\noindent\textbf{Weak learner.} Let $\gamma>0$ be a parameter. For our purpose, a $\gamma$-\emph{weak learner} with hypothesis set $\mathcal{H}$ is an oracle that can be queried with distributions supported on $\calX\times \{\pm 1\}$. Given a query distribution $\calD$, the weak learner returns a hypothesis $h\in \mathcal{H}$ such that
\[
\Loss_\calD(h)\coloneqq \Ex_{(x,y)\sim \calD}[ h(x) \ne y ] \le \frac{1}{2}-\gamma.
\]
If there are multiple $h\in \mathcal{H}$ with loss bounded by $\frac{1}{2}-\gamma$, the weak learner can return an arbitrary one. The weak learner may declare ``failure'' if there is no valid hypothesis. We call the oracle a ``weak learner'' because the learner produces hypotheses ``slightly'' better than random guess (note that we think of $\gamma$ as small).

\medskip\noindent\textbf{Strong learner and Boosting.} A strong learner is a learning algorithm that, given $n$ i.i.d. samples from a \emph{realizable} distribution $\calD$, with probability $1-\delta$, produces a hypothesis $\hat{h}$ with $\Loss_\calD(\hat{h})\le \eps$. Usually, in a strong learning algorithm, the accuracy $\eps$ and confidence $\delta$ can be arbitrarily small with only a $\poly(1/\varepsilon, \log(1/\delta))$ overhead in the sample and computational complexity.

In the boosting framework, the goal is to design a general weak-to-strong learner (a.k.a.~a boosting algorithm) \emph{without} the knowledge of the hypothesis class $\mathcal{H}$. Instead, the boosting algorithm is given oracle access to a weak learner for $\mathcal{H}$, and the boosting procedure should work well for every valid weak learner oracle.

More precisely, let $\calA$ be a boosting algorithm. There is a hypothesis class $\calH$ and a realizable distribution $\calD$, both unknown to the boosting algorithm $\calA$. First, $\calA$ receives $m$ i.i.d.~samples from $\calD$, denoted by $ = \{(x_i,y_i)\}_{i\in [m]}$. Then, $\calA$ interacts with a weak learner $\calW$ of $\calH$ for $p\ge 1$ rounds. In each round, $\calA$ sends a set of at most $t$ queries to $\calW$, where each query is a distribution $\calD'$ over $S$. The weak learner then answers all the queries in parallel. Namely, for each $\calD'$, the weak learner reports a hypothesis $h\in \calH$ such that $\loss_{h}(\calD') \le \frac{1}{2} - \gamma$. Finally, after the $p$ rounds of interaction, $\calA$ returns a hypothesis $h^*$ (not necessarily from $\calH$) that tries to minimize the loss $\loss_{\calD}(h^*)$. We use $\calA^{\calW}(S)$ to denote the output of $\calA$ with oracle $\calW$ and input data set $S$.

We can see that the complexity of the boosting algorithm is parameterized by three parameters $m,p,t$. We call $m$ the \emph{sample complexity} of $\calA$, and say the algorithm has \emph{parallel complexity} $(p,t)$.

\section{Lower Bound Against Slight Parallelization}

Our negative result shows that, given oracle access to a $\gamma$-weak learner, any boosting algorithm with $o(\frac{1}{\gamma^2})$ rounds of interaction cannot learn the target hypothesis with $1-o(1)$ accuracy unless it makes exponential (in VC dimension) many calls to the weak learner in each round. We formalize our claim into the following main theorem.  

\begin{theorem}\label{main-theorem}\label{theo:main-lower-bound}
There is a universal constant $\clb > 0$ for which the following is true. For every $0 < \gamma < 1/2$, every $d,m\ge 1$, let $\calA$ be a boosting algorithm that uses $m$ samples and has parallel complexity $(p, t)$ where $p\le \min(\frac{\clb d}{\gamma^2}, \exp(\clb d))$ and $t\le \exp(\clb d)$. Then, there exists a domain $\calX$ of size $2m$, a hypothesis class $\calH\subseteq \{\pm 1\}^{|\calX|}$ of VC dimension $\frac{d}{\clb}$, a realizable distribution over $\calX\times \{\pm 1\}$, and a $\gamma$-weak learner $\weakl$ for $\calH$ such that
\begin{align} \label{eq:expected-loss-in-lower-bound}
\Ex_{S\sim \calD^{m},\calA}[\loss_\calD(\strl^{\weakl}(S))] \ge \exp(-O(p \gamma^2 + 1)).
\end{align}
\end{theorem}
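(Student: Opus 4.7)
The plan is to realize the coin-problem strategy sketched in Section~\ref{sec:tech}. Take $\calX = [2m]$ and let the ground truth $c : \calX \to \{\pm 1\}$ be drawn uniformly at random; the realizable target distribution $\calD$ is uniform over $\{(x, c(x)) : x \in \calX\}$. For each round $i \in [p]$, I independently sample a ``coin'' hypothesis $h^{(i)}$ by setting $h^{(i)}(x) = c(x)$ with probability $\tfrac{1}{2} + \gamma$ independently across $x$, together with $k = \exp(\Theta(d))$ uniformly random ``background'' hypotheses $r^{(i)}_1, \ldots, r^{(i)}_k$. The fixed hypothesis class handed to the boosting algorithm is
\[
\calH = \{c\} \cup \bigcup_{i=1}^{p} \bigl( \{h^{(i)}\} \cup \{r^{(i)}_j\}_{j \in [k]} \bigr),
\]
which has $|\calH| \le \exp(O(d))$ and hence VC dimension $O(d)$. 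The weak learner, on a query $\calD'$ in round $i$, returns $h^{(i)}$ whenever $\loss_{\calD'}(h^{(i)}) \le \tfrac{1}{2} - \gamma$ and otherwise returns some $r^{(i)}_j$ meeting that bound.

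The analysis splits into a \emph{validity lemma} and an \emph{information-theoretic lower bound}. For validity I partition queries by a threshold on $\|\calD'\|_\infty$. On ``spread-out'' queries, Hoeffding's inequality applied to the independent coins inside $h^{(i)}$ yields $\loss_{\calD'}(h^{(i)}) \le \tfrac{1}{2} - \gamma$ with probability $1 - \exp(-\Omega(d))$; a covering argument over combinatorially distinct spread-out queries, combined with the hypotheses $p, t \le \exp(\clb d)$, gives simultaneous validity via a union bound. On ``sparse'' queries, whose effective support is small, a similar union bound over support patterns shows that with probability $1 - \exp(-\Omega(d))$ at least one of the $k = \exp(\Theta(d))$ random backgrounds meets the $\tfrac{1}{2} - \gamma$ bound on every possible sparse query. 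This dichotomy is a simplified variant of the recursive construction of \cite{karbasi2023impossibility}, sidestepping their nested set-hiding.

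For the information-theoretic lower bound, I condition on the high-probability event that $h^{(i)}$ itself satisfies the loss bound on every query in round $i$, so that the booster's view reduces to $S$ together with the complete functions $\{h^{(i)}\}_{i=1}^p$. Then for any $x \notin S$, the bits $h^{(1)}(x), \ldots, h^{(p)}(x)$ are $p$ i.i.d.\ $\Ber(\tfrac{1}{2} + \gamma)$ observations of $c(x)$, and standard coin-problem anticoncentration implies that every estimator of $c(x)$ errs with probability at least $\exp(-O(p\gamma^2 + 1))$. Since $|\calX \setminus S| \ge \Omega(m)$ with overwhelming probability over $S$, averaging over unseen coordinates yields the target bound $\Ex[\loss_\calD(\strl^\calW(S))] \ge \exp(-O(p\gamma^2 + 1))$.

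The main obstacle will be justifying the conditioning in the last step: I need the event that ``$h^{(i)}$ is valid for every adaptively chosen query in round $i$'' to occur with enough probability that conditioning on it does not distort the per-coordinate coin-problem statistics. If this clean event fails, the weak learner must fall back on a background hypothesis, and both the identity of this background and the decision to switch to it leak additional information about $c$. I expect to handle this either by tightening the spread-out/sparse threshold so that the clean event has probability $1 - o(1)$ over the joint randomness of $c$, the booster, and the $h^{(i)}$'s, or by explicitly accounting for the at most $O(d)$ bits of leakage per round through the background-selection channel and arguing that they are too few, spread over $\Omega(m)$ unseen points, to shift the per-point posterior beyond the coin-problem floor.
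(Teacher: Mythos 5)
Your construction is essentially the paper's: a ground truth $c$ drawn uniformly at random, per-round ``coin'' hypotheses biased towards $c$, plus $\exp(\Theta(d))$ random background hypotheses, and the same weak learner and the same coin-problem lower bound for the unseen points. That part is on target. However, there are three genuine gaps in how you close the argument.

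\begin{itemize}
\item \textbf{Bias must exceed $\gamma$.} You set $h^{(i)}(x)=c(x)$ with probability $\tfrac{1}{2}+\gamma$, so $\Ex[\loss_{\calD'}(h^{(i)})]= \tfrac{1}{2}-\gamma$ exactly. A Hoeffding bound then cannot give $\loss_{\calD'}(h^{(i)})\le \tfrac{1}{2}-\gamma$ with high probability --- roughly half the time you overshoot, and the heavy coordinates of $\calD'$ (which the paper handles by truncating the top $d$ masses) only make this worse. The paper uses bias $\Cbias\gamma$ with a constant $\Cbias>1$ chosen so that the contribution from heavy coordinates plus the Hoeffding fluctuation is absorbed while still leaving a $\gamma$ margin. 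Relatedly, the paper's spreadness measure $F(\calD')=\sum_{i\le d} w_{(i)}+\sqrt d\,\big(\sum_{i>d}w_{(i)}^2\big)^{1/2}$, not a simple $\|\calD'\|_\infty$ cutoff, is what makes both the Hoeffding estimate on the tail and the Montgomery--Smith anti-concentration bound on the head fit together with the same parameter $d$.
\item \textbf{Validity is proved per query, not via a net.} You propose a ``covering argument over combinatorially distinct spread-out queries'' and a ``union bound over support patterns'' for sparse queries. Neither works as stated: distributions over $[2m]$ form a $(2m-1)$-dimensional simplex, a covering of which has size exponential in $m$, and even fixing an $O(d)$-support gives $\binom{2m}{O(d)} = m^{\Theta(d)}$ patterns, far beyond your $\exp(O(d))$ budget. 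The paper avoids any covering by induction on the round: conditioned on the event that rounds $1,\dots,i-1$ were all answered inside $\calH^{(1)}\cup\dots\cup\calH^{(i-1)}$, the round-$i$ queries are a function of $S$ and the booster's internal randomness alone, hence \emph{independent} of $\calH^{(i)}$. One then shows each fixed query fails to be answered by $\calH^{(i)}$ with probability $\exp(-\Omega(d))$, and union-bounds over only the $t\le\exp(\clb d)$ actual queries of that round. This independence-by-induction step is the core of the validity lemma and is missing from your sketch.
\item \textbf{The conditioning worry has a cleaner resolution than you propose.} You condition on the overly restrictive event ``$h^{(i)}$ answers every round-$i$ query,'' and then rightly worry that on its complement the choice of background ``leaks'' information. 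The paper's event $\calE$ is weaker --- only that every query is answerable by $\calH\setminus\{c\}$ --- and its key observation is that the background hypotheses are drawn \emph{independently of $c$}, so exposing them and exposing which one answered a query leaks nothing about $c$. Under $\calE$ the booster's entire output is a function $\calB(S,\calH^{(1)},\dots,\calH^{(p)})$ of objects whose only $c$-dependence is through $S$ and the $a^{(i)}$'s, which is exactly the coin game. Finally, the paper sidesteps the ``conditioning distorts the posterior'' issue altogether by writing $\Ex[\loss(\calA^\calW(S))] \ge \Ex[\mathbbm{1}_{\calE}\cdot\loss(\calB)] \ge \Ex[\loss(\calB)] - \Pr[\overline{\calE}]$ and invoking the coin lower bound on the \emph{unconditional} $\Ex[\loss(\calB)]$; the hypothesis $p\le \clb d/\gamma^2$ is exactly what makes $\Pr[\overline\calE]\le\exp(-\Omega(d))$ negligible against the $\exp(-O(p\gamma^2+1))$ target. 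Your proposed alternatives (tightening thresholds, counting $O(d)$ leaked bits per round) are unnecessary once this decoupling is in place.
\end{itemize}

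In short: right construction and right end-game lemma, but the validity lemma needs the bias bump, the right spreadness functional, and the independence-by-induction argument; and the conditioning step needs the $\mathbbm{1}_{\calE}$ decoupling rather than an information-leakage accounting.
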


The rest of the section is devoted to the proof of \Cref{theo:main-lower-bound}.

\subsection{Construction of Hard Instances} \label{sec:lower-bound-construct}

In this subsection, we fix parameters $d,m$ in \Cref{theo:main-lower-bound}, and describe our construction of the hard hypotheses class $\calH$ and weak learner $\calW$. We will give a randomized construction that is \emph{independent} of the boosting algorithm. Then, for any fixed boosting algorithm $\calA$, we will show that our construction incurs a noticeable loss in expectation, which implies that there exists an instantiation of the construction on which $\calA$ fails to boost. %

\paragraph*{Defining the learning task.} Let $m > 0$ be the number of samples requested by the strong learner, which can be arbitrarily large (e.g., it can be as large as $d^{d^d}$). We aim to construct a learning task with a hypothesis class of VC dimension $d$, on which the strong learner fails even with $m$ samples.

Now, given $m$, we define the input domain to be $\calX = \{x_1, x_2, \dots, x_{2m}\}$. We sample a concept $c$ over $\calX$ by setting each $c(x_i)$ to $\pm 1$ randomly and independently. In the lower bound proof, we will think of $c$ as the ``ground truth'', which the strong learner attempts to learn. Hence, we define the target distribution $\mathcal{D}$ as the uniform distribution over $\{(x_i,c(x_i))\}_{1\le i\le 2m}$. 

\paragraph*{Weak learner and hypothesis class.} We now construct the hypothesis class $\mathcal{H}$. We conduct the construction in $p$ stages. In each stage $i\in [p]$, we first create a hypothesis class $\mathcal{H}^{(i)}$ of size $2^{O(d)}$ where big-$O$ hides an absolute constant. The construction of $\calH^{(i)}$ is as follows.

\begin{itemize}
    \item First, we sample a hypothesis $a^{(i)} \in \{1, -1\}^{2m}$ by drawing each $a^{(i)}(x_j) \sim (-1)^{\Ber(1/2 - \Cbias \cdot \gamma \cdot c(x_j))}$ independently. 
    Considering $c$ as the ``ground truth'', the hypothesis $a^{(i)}$ gives an expected advantage of $\Cbias\gamma$ over random guess.
    \item For a parameter $\hatd = \Theta(d)$ (Here, $\Theta$ hides an absolute constant that will be specified later), draw $r^{(i)}_1,\dots, r^{(i)}_{2^{\hatd}}$ where each $r^{(i)}_{j} \sim \{\pm 1\}^{2m}$. Namely, we draw $2^{\hatd}$ hypotheses uniformly at random.
    \item Finally, we define $\calH^{(i)} = \{a^{(i)}\} \cup \{r^{(i)}_j : 1\le j\le 2^{\hatd}\}$.
\end{itemize}

We set our final hypothesis class as $\calH = \{c\} \cup \bigcup_{i=1}^{p} \calH^{(i)} $. It is easy to see that $|\calH|\le 2^{O(d)}$ and consequently $\VCDim(\calH)\le O(d)$. 

Next, we define the weak learner $\calW$. We sort all hypotheses in the order of $\calH^{(1)},\dots, \calH^{(p)}, \{c\}$ (the order inside a sub-class can be arbitrary). Then, on a given query $\calD$, $\calW$ outputs the \emph{first} hypothesis $h$ from the list such that $\loss_{\calD}(h) \le \frac{1}{2}-\gamma$. This defines a valid weak learner for $\calH$.

\paragraph*{Proof Outline.} We present the overall proof structure and state two key claims. Assuming them, we quickly finish the proof of \Cref{theo:main-lower-bound}. We prove the claims in subsequent subsections.

Our first claim is that, with high probability, the queries made by $\calA$ in the first $i$ rounds can be answered by hypotheses from $\calH^{(1)},\dots, \calH^{(i)}$ \emph{only}.

\begin{claim} \label{low-fail-prob} \label{claim:low-fail-prob} 
Let $\calA$ be a boosting algorithm. For every $i\le p$ and every fixed realization of $c\sim \{\pm 1\}^{2m}$ and $S\sim \calD^{m}$, with probability $1-i\cdot \exp(-\Omega(d))$ (over the construction of $\calH$ and the interaction between $\calA$ and $\calW$), the queries made by $\calA$ in the first $i$ rounds can be answered by hypotheses from $\calH^{(1)},\dots,\calH^{(i)}$.
\end{claim}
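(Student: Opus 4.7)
The plan is to derive Claim 1 by chaining per-round failure probabilities. Let $E_j$ denote the event that every query from rounds $1, \ldots, j$ is answered by some hypothesis in $\calH^{(1)} \cup \cdots \cup \calH^{(j)}$. Since $\Pr[E_i] \geq \prod_{j=1}^{i} \Pr[E_j \mid E_{j-1}]$, it suffices to prove the per-round bound $\Pr[E_j \mid E_{j-1}] \geq 1 - \exp(-\Omega(d))$ and then union-bound. Now fix any realization of $\calH^{(1)}, \ldots, \calH^{(j-1)}$ and of $\calA$'s randomness through round $j$ that is consistent with $E_{j-1}$. Under this conditioning the (at most $t$) queries $\calD'_1, \ldots, \calD'_t$ posed in round $j$ become deterministic, while $\calH^{(j)}$ remains independent of everything fixed so far. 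Since $t \leq \exp(\clb d)$, a union bound over queries reduces the task to showing, for each individual query $\calD'$,
\[
\Pr_{\calH^{(j)}}\bigl[\forall h \in \calH^{(j)} : \loss_{\calD'}(h) > 1/2 - \gamma\bigr] \;\leq\; \exp(-\Omega(d) - \log t).
\]

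The per-query analysis then splits on how ``spread out'' $\calD'$ is. I would compare $\|\calD'\|_2^2$ to a threshold $\athre = C_0 \gamma^2 / d$ for a constant $C_0$ to be tuned. In the spread-out regime $\|\calD'\|_2^2 \leq \athre$, the planted biased hypothesis $a^{(j)}$ already suffices. Since each $a^{(j)}(x)$ independently agrees with $c(x)$ with probability $1/2 + \Cbias \gamma$, the quantity $\loss_{\calD'}(a^{(j)}) = \sum_x \calD'(x)\,\ind[a^{(j)}(x) \neq c(x)]$ is a sum of independent variables taking values in $[0, \calD'(x)]$ with mean $1/2 - \Cbias \gamma$. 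Hoeffding's inequality yields
\[
\Pr\bigl[\loss_{\calD'}(a^{(j)}) > 1/2 - \gamma\bigr] \;\leq\; \exp\!\left(-\Omega\!\left(\frac{(\Cbias-1)^2 \gamma^2}{\|\calD'\|_2^2}\right)\right) \;\leq\; \exp\bigl(-\Omega((\Cbias-1)^2 d / C_0)\bigr),
\]
which is $\exp(-\Omega(d))$ with a constant as large as needed, once $\Cbias$ is taken large enough relative to $\sqrt{C_0}$.

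In the concentrated regime $\sigma^2 := \|\calD'\|_2^2 > \athre$, I would rely on the $2^{\hatd}$ uniform random hypotheses $r^{(j)}_k$. Writing $\loss_{\calD'}(r) = \tfrac{1}{2} + \tfrac{1}{2}\sum_x \calD'(x) Z_x$ with independent Rademacher $Z_x$, a standard anti-concentration estimate (Berry--Esseen when no single coordinate dominates, and a direct one-coordinate argument when $\|\calD'\|_\infty$ is comparable to $\sigma$) yields
\[
p_0 := \Pr_r\bigl[\loss_{\calD'}(r) \leq 1/2 - \gamma\bigr] \;\geq\; \exp\!\bigl(-O(\gamma^2/\sigma^2)\bigr) \;\geq\; 2^{-c_1 \hatd}
\]
for some constant $c_1 < 1$, provided the constant in $\hatd = \Theta(d)$ is chosen large enough relative to $C_0$. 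Independence of the $2^{\hatd}$ random hypotheses then gives
\[
\Pr\bigl[\text{no } r^{(j)}_k \text{ succeeds}\bigr] \;\leq\; (1-p_0)^{2^{\hatd}} \;\leq\; \exp\!\bigl(-2^{(1-c_1)\hatd}\bigr) \;\leq\; \exp(-\Omega(d)).
\]
The main obstacle I anticipate is the simultaneous balancing of the constants $\athre$, $\Cbias$, $\hatd$, and $\clb$: the threshold $\athre$ must be small enough that Hoeffding yields $\exp(-\Omega(d) - \log t)$ in Case A, yet large enough that the anti-concentration probability in Case B comfortably exceeds $2^{-\hatd}$. I plan to resolve this by fixing $C_0$ sufficiently large first, then taking $\hatd$ a sufficiently large constant multiple of $d$ so that $c_1 < 1$, then $\Cbias$ large enough to absorb Case A's exponent, and finally shrinking $\clb$ so that the union-bound overhead $\log t \leq \clb d$ is dominated by the $\Omega(d)$ in the exponent of both cases.
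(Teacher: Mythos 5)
Your overall architecture matches the paper's: induction on rounds so that the round-$j$ queries are deterministic and independent of $\calH^{(j)}$, union bound over the at most $t\le \exp(\clb d)$ queries, and a spread/concentrated dichotomy with Hoeffding on the planted hypothesis $a^{(j)}$ in one branch and anti-concentration of uniformly random hypotheses in the other. The spread branch is sound (and with your stronger notion of spread you can even avoid the paper's device of peeling off the top-$d$ coordinates). The problem is the concentrated branch.

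In the concentrated regime you need $p_0 := \Pr_r[\loss_{\calD'}(r)\le 1/2-\gamma] \ge \exp(-O(d))$, and you propose to get it from ``Berry--Esseen when no single coordinate dominates, and a direct one-coordinate argument when $\|\calD'\|_\infty$ is comparable to $\sigma$.'' These two cases do not cover the regime. The one-coordinate trick (condition on $\mathrm{sign}(Z_{(1)})$, use symmetry of the remainder) needs $\|\calD'\|_\infty \gtrsim \gamma$, not $\|\calD'\|_\infty\gtrsim \sigma$; above the concentration threshold one can have $\sigma \approx \sqrt{C_0/d}\,\gamma \ll \gamma$, so $\|\calD'\|_\infty$ comparable to $\sigma$ is far from enough. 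And Berry--Esseen gives $\Pr[\langle\calD',Z\rangle\ge 2\gamma] \ge \Phi(-2\gamma/\sigma) - O(\|\calD'\|_\infty/\sigma)$, which is vacuous whenever $2\gamma/\sigma$ is moderately large: just above the threshold, $2\gamma/\sigma$ can be of order $\sqrt{d}$, so $\Phi(-2\gamma/\sigma)$ is $\exp(-\Theta(d))$ while the Berry--Esseen error term is only polynomially small. For example, $\calD'$ uniform over $\Theta(d/\gamma^2)$ points is concentrated under your criterion, has $\|\calD'\|_\infty \ll \gamma$, and has a Gaussian tail term that is exponentially smaller than the Berry--Esseen error; the target bound is still true, but neither of your two tools proves it. This is exactly what the paper's use of the Montgomery--Smith lower-tail inequality (\Cref{theo:anti-concentration}) is for — it gives the lower tail bound of the right form without relying on a normal approximation — and the paper's spread/concentrated criterion via $F(\calD')$ is designed to dovetail with that inequality so that ``concentrated'' immediately yields $\alpha_1 F(\calD')\ge 2\gamma$. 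Your claimed inequality $p_0 \ge \exp(-O(\gamma^2/\sigma^2))$ is the right target, but to establish it for Rademacher sums you need a genuine anti-concentration result such as Montgomery--Smith, not Berry--Esseen.
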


\Cref{claim:low-fail-prob} is proved in \Cref{sec:low-fail}. Assuming it for now, the following corollary is immediate.

\begin{corollary} \label{simulation-claim} \label{claim:simulation}
    With probability $1-\exp(-\Omega(d))$, the interaction between $\strl$ and $\weakl$ can be simulated given $S,\calH^{(1)},\dots, \calH^{(p)}$. In particular, the simulation does not require the knowledge of $c$.
\end{corollary}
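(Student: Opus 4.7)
The plan is to derive the corollary as an almost immediate consequence of Claim~\ref{claim:low-fail-prob} by specializing to $i = p$ and unioning the bad event over rounds. The claim gives a failure probability of $p \cdot \exp(-\Omega(d))$; since the hypothesis of \Cref{theo:main-lower-bound} assumes $p \le \exp(\clb d)$, I would choose $\clb$ strictly smaller than the absolute constant hidden in the $\Omega(\cdot)$ so that $p \cdot \exp(-\Omega(d)) = \exp(-\Omega(d))$. This yields the stated probability bound, and the remainder of the proof is to describe the simulator explicitly and observe what information it actually consumes.

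Next I would describe the simulator. Given $S$ and $\calH^{(1)}, \dots, \calH^{(p)}$, I simulate $\calA$'s execution step by step. Whenever $\calA$ issues a query distribution $\calD'$ (which, by the model, is supported on $S$), the simulator walks through the hypothesis list in exactly the order used by $\calW$, that is, through $\calH^{(1)}, \calH^{(2)}, \dots, \calH^{(p)}$, and returns the first hypothesis $h$ for which $\loss_{\calD'}(h) \le \frac{1}{2} - \gamma$. Because $\calD'$ is supported on $S$, computing $\loss_{\calD'}(h) = \Ex_{(x,y) \sim \calD'}[h(x) \ne y]$ requires only the labels of points in $S$, all of which are given. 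On the good event of \Cref{claim:low-fail-prob} with $i = p$, this scan always succeeds before reaching $\{c\}$, so the simulator's response is identical to $\calW$'s response, and the subsequent distributions produced by $\calA$ (which depend only on past answers and $S$) are also reproduced faithfully.

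The key point, and the reason this establishes the second sentence of the corollary, is that the simulator never evaluates $c$ on any point outside $S$: the only use of labels is inside loss computations against query distributions supported on $S$, and those labels are carried in $S$ itself. There is no real obstacle here beyond calibrating $\clb$ as above and verifying that the tie-breaking order in the simulator matches the one used in the definition of $\calW$, both of which are straightforward.
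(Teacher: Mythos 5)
Your proof is correct and follows essentially the same route as the paper: invoke Claim~\ref{claim:low-fail-prob} with $i=p$, absorb the $p \cdot \exp(-\Omega(d))$ failure probability into $\exp(-\Omega(d))$ using $p \le \exp(\clb d)$, and observe that on the good event the weak learner's responses can be reconstructed from $S$ and $\calH^{(1)},\dots,\calH^{(p)}$ alone since query distributions are supported on the labeled set $S$. Your write-up is a bit more explicit about the simulator's scan order and about why losses are computable without $c$, but the argument is the same.
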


\begin{proof}
    By Claim~\ref{claim:low-fail-prob} and the condition that $p \le \exp(\clb \cdot d)$ where $\clb>0$ is sufficiently small, we have with probability $1-\exp(-\Omega(d))$ that the queries of $\calA$ can be answered with hypotheses from $\calH^{(1)},\dots, \calH^{(p)}$ only. 
    
    We now describe the simulation. We can start by simulating $\calA$ given the data $S$. In each of the $p$ rounds, $\calA$ issues a set of queries. Under the aforementioned event, each query can be answered by some hypothesis from $\calH\setminus \{c\}$, which means we can compute the responses of the weak learner given $\calH \setminus \{c\}$. Finally, after the $p$ rounds of interaction, we finish by simulating the remaining pieces of $\calA$ and reporting its output.
\end{proof}

Next, we would like to show that, given only $S$ and $\calH \setminus \{c\}$, the learner cannot produce a hypothesis with low generalization error.

\begin{claim} \label{loss-lowerbound} \label{claim:loss-lowerbound}
    Let $\calB = \calB(S,\calH^{(1)},\dots, \calH^{(p)})$ be an arbitrary aggregation procedure that outputs a hypothesis $\hat{h}$. We have
    \[
    \Ex_{c,S,\calH}[\loss_{\calD}(\hat{h} = \calB(S,\calH^{(1)},\dots,\calH^{(p)}))] \ge \exp(-O(p\gamma^2 + 1)).
    \]
\end{claim}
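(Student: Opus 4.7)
The plan is to reduce, for each index $j$, the task of predicting $c(x_j)$ to a coin problem and then invoke a standard Bayes-error calculation. Let $\hat h = \calB(S, \calH^{(1)}, \dots, \calH^{(p)})$. I will show that for every $j \in [2m]$,
\[
\Pr[\hat h(x_j) \ne c(x_j)] \ge \tfrac{1}{2}\exp(-O(p\gamma^2 + 1)),
\]
after which averaging via $\Ex[\loss_\calD(\hat h)] = \tfrac{1}{2m}\sum_j \Pr[\hat h(x_j) \ne c(x_j)]$ gives the claim. The target exponent is precisely the Bayes error of distinguishing $p$ i.i.d.\ samples from $\Ber(1/2 + \Cbias\gamma)$ versus $\Ber(1/2 - \Cbias\gamma)$, which is the hypothesis-testing problem $\calB$ effectively faces at coordinate $j$.

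Fix an index $j$ and let $E_j$ be the event that $x_j$ is not among the $m$ samples in $S$; since $S$ is i.i.d.\ uniform over $\calX$, $\Pr[E_j] = (1 - 1/(2m))^m \ge 1/2$. The structural claim is that, conditioned on $E_j$, the view $(S, \calH^{(1)}, \dots, \calH^{(p)})$ of $\calB$ is conditionally independent of $c(x_j)$ given the $p$-bit tuple $(a^{(1)}(x_j), \dots, a^{(p)}(x_j))$. This uses three independences baked into the construction: (i) each $r^{(i)}_\ell$ is drawn uniformly from $\{\pm 1\}^{2m}$, hence independent of $c$; (ii) the entry $a^{(i)}(x_{j'})$ depends only on $c(x_{j'})$, and the coordinates of $c$ are mutually independent; and (iii) on $E_j$, the labels revealed by $S$ concern only indices $j' \ne j$, whose labels are independent of $c(x_j)$. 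By the data-processing inequality, the probability that $\hat h$ mispredicts at $x_j$ is therefore at least the Bayes error of the above two-hypothesis test on $p$ $\Cbias\gamma$-biased coins.

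To bound this Bayes error I use the Bhattacharyya coefficient: for a single coin, $\rho_1 := \rho\bigl(\Ber(1/2+\Cbias\gamma), \Ber(1/2-\Cbias\gamma)\bigr) = \sqrt{1 - 4\Cbias^2\gamma^2}$, which tensorizes to $\rho_p = \rho_1^p$ for $p$ i.i.d.\ samples. Combined with the Cauchy--Schwarz inequality $\int \min(p,q) \ge \rho^2 / 2$, this yields Bayes error at least $\rho_p^2/4 = (1 - 4\Cbias^2\gamma^2)^p / 4 \ge \tfrac{1}{4}\exp(-O(p\gamma^2))$, where the last step uses $\ln(1 - x) \ge -2x$ for small $x$. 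This is at least $\exp(-O(p\gamma^2 + 1))$, so combining with $\Pr[E_j]\ge 1/2$ and summing over the $2m$ indices gives the claim.

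The only calculation with any content is the Bhattacharyya lower bound, which is routine. The main care required is in the conditioning argument of the second paragraph: one must carefully justify that $\calB$'s access to the full hypothesis sets $\calH^{(i)}$, not merely the weak learner's answers on its queries, provides no information about $c(x_j)$ beyond the $p$ biased coin flips $\{a^{(i)}(x_j)\}_i$; this is what the three independence properties of the construction are designed to give us.
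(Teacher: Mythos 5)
Your proof is correct and follows essentially the same approach as the paper: conditional on a coordinate $x_j$ not being sampled into $S$, reduce the task of predicting $c(x_j)$ from the view $(S, \calH^{(1)}, \dots, \calH^{(p)})$ to the problem of distinguishing $p$ i.i.d.\ $\Cbias\gamma$-biased coin flips, then average over the roughly half of the coordinates satisfying this condition. The only difference is cosmetic: the paper invokes the coin-problem lower bound (\Cref{coin-lowerbound}) as a black box, whereas you supply a self-contained proof of it via the Bhattacharyya coefficient and Cauchy--Schwarz.
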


\Cref{loss-lowerbound} is proved in \Cref{sec:loss-lowerbound}. Assuming it, we prove our main theorem.
\paragraph*{Proof of \Cref{main-theorem}.}
Let $\calE$ be the event defined in \Cref{claim:simulation}. It follows that $\Pr[\overline{\calE}] \le \exp(-\Omega(d))$. Under the event $\calE$, the behavior of the boosting algorithm can be written as a procedure $\calB(S,\calH^{(1)},\dots,\calH^{(p)})$ that depends only on $S$ and $\calH\setminus \{c\}$. By \Cref{loss-lowerbound}, we obtain
\[
\begin{aligned}
&~~~~ \Ex_{c\sim\{\pm 1\}^{2m},S\sim\calD^{m},\calH, \calA} \left[\loss_{\calD}(\calA^{\calW}(S)) \right] \\
& \ge \Pr[\calE] \cdot \Ex[\loss_{\calD}(\calB(S,\calH^{(1)},\dots,\calH^{(p)})) \mid \calE] \\
& \ge \Ex[\mathbbm{1}_{\calE \text{ happens}} \cdot  \loss_{\calD}(\calB(S,\calH^{(1)},\dots,\calH^{(p)}))] \\
& \ge \Ex[\loss_{\calD}(\calB(S,\calH^{(1)},\dots,\calH^{(p)}))] - \Pr[\overline{\calE}] & \text{(loss is always bounded by $1$)}\\ 
& \ge \exp(-O(p\gamma^2 + 1)).
\end{aligned}
\]
This shows that a random construction of $\calH$ incurs a loss of $\exp(-O(p\gamma^2 + 1))$ in expectation. By averaging, there exists an instantiation of the construction satisfying the proposition of \Cref{theo:main-lower-bound}, which completes the proof.

\subsection{Proof of \Cref{low-fail-prob}} \label{sec:low-fail}

We prove \Cref{low-fail-prob} by inducting on $i\le p$. Assume the claim is true for the first $i-1$ rounds. Fix the data set $S$. Under the event that queries of the first $i-1$ rounds were answered using hypotheses from $\calH^{(1)},\dots,\calH^{(i-1)}$, the current set of at most $t$ queries, denoted by $\calD^{(i)}_1,\dots, \calD^{(i)}_{t'}$, are independent of $\calH^{(i)}$: they only depend on the internal randomness of $\calA$ after conditioning on $S$ and $\calD^{(j)}$ for $j < i$.

Let $\calD'$ be one query from $\{ \calD^{(i)}_1,\dots, \calD^{(i)}_{t'} \}$. We will show that the probability $\calD'$ fails to be answered by $\calH^{(j)}, j\le i$, is bounded by $\exp(-\Omega(d))$. Having established the claim, we may union bound over all $t' \le \exp(\clb \cdot d)$ queries and finish the proof for the $i$-th round. 

    Now, assume $\calD'$ is a query distribution that is independent of $\calH^{(i)}$. We will further divide the query into two cases. Namely the \emph{spread} case and the \emph{concentrated} case. Let $\athre > 0$ be a (large) constant to be specified later. We begin by formalizing the definition of spread distribution.

\begin{definition}[Spread distribution] 
Let $\calD'$ be a distribution over $\calX$. Sort elements in $\calX$ by their probability mass under $\calD'$ and let $w_{(i)}$ denote the $i$-th largest probability mass. We define the spreadness of $\calD'$ as
\begin{align*}
    F(\calD') = \sum_{i=1}^{d} w_{(i)} + \sqrt{d} \left( \sum_{i=d+1}^{|\calX|} w_{(i)}^2\right)^{1/2}.
\end{align*}
We say that $\calD'$ is \emph{spread} if $F(\calD')$ is smaller than $\athre\gamma$. Otherwise, we say $\calD'$ is \emph{concentrated}.
\end{definition}

\begin{remark}
    The idea of considering spread and concentrated queries separately is directly inspired by \cite{karbasi2023impossibility}. However, we use a different definition of ``spread'' than the prior work, which turns out to be much simpler and facilitate our proof significantly.
\end{remark}

\paragraph*{Concentrated Queries.} Suppose $\calD'$ is a concentrated distribution. In this case, we first observe that the expected loss of a truly random hypothesis $r \sim \{1, -1\}^{2m}$ is $\frac{1}{2}$. For concentrated query distribution $\calD'$, we will use an anti-concentration inequality to lower bound the probability that $\loss_{\calD'}(r) < 1/2 - \gamma$. To start, define a function $F' : \R_+^n \times \R_+ \to \R$ for every non-negative vector $w \in \R_+^n$ and real $t > 0$:
\begin{align*}
    F'(w, t) \coloneqq \sum_{i=1}^{\lfloor t^2 \rfloor } |w_{(i)}| + t \left( \sum_{j=\lfloor t^2 \rfloor +1}^{n} w_{(j)}^2\right)^{1/2}
\end{align*}
where $w_{(i)}$ denotes the $i$-th largest entry of $w$.

\begin{lemma}[\cite{montgomery1990distribution}]\label{theo:anti-concentration} There are universal constants $\alpha_1, \alpha_2>0$ such that the following holds: For any vector $w \in \R_+^n$ and $t > 0$, it holds that $\Pr_{x\sim\{\pm 1\}^n}[ \langle w, x \rangle > \alpha_1 F'(w, t)] \ge \alpha_2^{-1} \exp(-\alpha_2t^2)$. 
\end{lemma}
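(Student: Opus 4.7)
The plan is to split the weight vector into a ``heavy'' block consisting of the top $\lfloor t^2\rfloor$ entries of $w$ and a ``light'' tail, and to handle the two pieces with entirely different tools. Write $H \coloneqq \sum_{i \le \lfloor t^2\rfloor} w_{(i)}$ and $L \coloneqq \bigl(\sum_{j > \lfloor t^2\rfloor} w_{(j)}^2\bigr)^{1/2}$, so $F'(w,t) = H + tL$. Decompose $\langle w, x\rangle = S_H + S_L$ accordingly; since $S_H$ and $S_L$ depend on disjoint coordinates of $x$, they are independent random variables.

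For the heavy block I would use a trivial sign-alignment argument. Let $E_1$ be the event that every Rademacher sign attached to a top-$\lfloor t^2\rfloor$ coordinate equals $+1$. Then $\Pr[E_1] = 2^{-\lfloor t^2\rfloor} \ge \exp(-\Theta(t^2))$, and on $E_1$ we have $S_H = H$ deterministically. So the heavy block already contributes $H$ with the target probability, and the signs in the tail remain uniform conditional on $E_1$.

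For the tail I want to show $\Pr[S_L > c_0\, t L] \ge \exp(-\Theta(t^2))$ for a universal $c_0 > 0$. I would split into two regimes. In the \emph{balanced} regime where $w_{(\lfloor t^2\rfloor+1)} \le L/t$, a one-sided CLT estimate (Berry-Esseen plus standard Gaussian tail bounds) shows that $S_L/L$ approximates $\mathcal{N}(0,1)$ in Kolmogorov distance with error $O(w_{(\lfloor t^2\rfloor+1)}/L) = O(1/t)$, while $\Pr[\mathcal{N}(0,1) > 2c_0 t] = \exp(-\Theta(t^2))$ dominates this error, yielding the claim. In the \emph{concentrated} regime where $w_{(\lfloor t^2\rfloor+1)} > L/t$, each of the top $\lfloor t^2\rfloor$ weights dominates $w_{(\lfloor t^2\rfloor+1)}$, so $tL \le t^2\, w_{(\lfloor t^2\rfloor+1)} \le H$; hence $F'(w,t) \le 2H$, and a trivial Chebyshev bound $\Pr[|S_L| \le 2L] \ge 3/4$ combined with $L \le H/t$ suffices to conclude $S_H + S_L \ge H/2$ on the intersection with $E_1$ (for $t$ a sufficiently large constant; for small $t$ the probability $\exp(-\alpha_2 t^2)$ is already a constant and any constant-probability event works).

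Combining the pieces: in the balanced regime the independent events $E_1$ and $\{S_L > c_0 tL\}$ co-occur with probability $\ge \exp(-\Theta(t^2))$, giving $\langle w, x\rangle \ge H + c_0 tL \ge \alpha_1 F'(w,t)$ with $\alpha_1 \coloneqq \min(1, c_0)/2$; in the concentrated regime $E_1$ together with $\{S_L \ge -2L\}$ already gives $\langle w,x\rangle \ge H/2 \ge \alpha_1 F'(w,t)$. The main obstacle is the one-sided lower tail for the Rademacher sum $S_L$ in the balanced regime: the plain Berry-Esseen inequality is a two-sided Kolmogorov estimate, and one has to invoke a sharper one-sided Gaussian approximation or, equivalently, carry out an exponential-tilting/Paley-Zygmund argument on a tilted Rademacher sum to pin down the constants---this is essentially the content of the original proof in \cite{montgomery1990distribution}. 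Once this CLT lower tail is in hand, the remainder is elementary case analysis and arithmetic.
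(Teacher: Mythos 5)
The paper does not prove this lemma; it cites it directly from Montgomery-Smith \cite{montgomery1990distribution}, so there is no ``paper's own proof'' to compare against. Evaluating your sketch on its own merits: the heavy/light decomposition matching the two summands of $F'(w,t)$, the sign-alignment trick for the top block (paying $2^{-\lfloor t^2\rfloor}$), and the independence of the two pieces are all correct, and the dichotomy between the ``balanced'' and ``concentrated'' regimes is the right way to organize things. The small-$t$ escape via Paley--Zygmund is also fine.

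However, the balanced-regime argument as written is wrong, and the way you describe the gap mischaracterizes it. You claim that the Kolmogorov error $O(1/t)$ from Berry--Esseen is \emph{dominated} by the Gaussian tail $\Pr[\mathcal{N}(0,1)>2c_0 t]=\exp(-\Theta(t^2))$; this is backwards. For $t$ beyond a constant, $\exp(-\Theta(t^2))\ll 1/t$, so the Berry--Esseen error completely swamps the target probability and the bound gives nothing. You then acknowledge a gap, but diagnose it as a one-sided vs.\ two-sided issue, which it is not---a two-sided Kolmogorov bound immediately yields a one-sided one. The real obstruction is that \emph{any} uniform (Kolmogorov-type) CLT with polynomial error cannot see an exponentially small tail; what is needed is a genuine moderate-deviations \emph{lower} bound of the form $\Pr\bigl[S_L > c_0\,tL\bigr]\ge \exp(-\Theta(t^2))$ valid whenever $\max_j w_j \le L/t$, proved by exponential tilting (shift the Rademacher measure to mean $\approx c_0 t L$, control the cost via the cumulant generating function, and apply a second-moment/Paley--Zygmund argument to the tilted sum) or by Montgomery-Smith's own iterated rearrangement argument. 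This lower tail estimate is not a routine patch: it is the substantive content of the cited theorem, and one cannot simply invoke ``sharper one-sided Gaussian approximation.'' Your plan correctly isolates where the difficulty lives and names a viable technique, but as a self-contained proof it is incomplete at precisely the step that carries all the weight.
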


By our definition, note that $F(\calD') = F'(\calD', \sqrt{d})$. Setting $t = \sqrt{d}$ and applying \Cref{theo:anti-concentration}, we obtain
\begin{align*}
    \Pr[1/2 - \loss_{\calD'}(r) >  \gamma] \ge \Pr[\langle \calD', r \cdot c \rangle  > \alpha_1 F(\calD')] \ge \alpha_2^{-1}\exp(-\alpha_2d),
\end{align*}
provided that $\athre > \frac{2}{\alpha_1}$. Here, $r \cdot c$ denotes the pointwise product of $r$ and $c$. Setting $\hatd = \log_2 (\alpha_2\exp(\alpha_2 d)) + 2d = \Theta(d)$, the probability that exists one hypothesis $r \in \calH^{(i)}$ such that $r$ has $\gamma$ advantage over $D'$ is at least $1 - (1 - \alpha_2^{-1} \exp(-\alpha_2 d))^{2^{\hatd}} \ge 1 - \exp(-\exp(d))$.

\paragraph*{Spread Queries.} For a fixed round $i$, let $\calD'$ be a spread distribution queried in the $i$-th round. Recall we chose $a^{(i)}$ as a random hypothesis with an expected advantage of $4\Cbias\cdot \gamma$. 

Assume the claim is true for the first $i-1$ rounds (that is, all the queries in the first $i-1$ round were answered using hypotheses from $\calH^{(1)}\cup \dots \cup \calH^{(i-1)}$). Then, the query $\calD'$ is independent of $a^{(i)}$. We now argue that, for a spread query $\calD'$, the probability that $a^{(i)}$ fails to achieve $\gamma$ advantage on $\calD'$ is exponentially small.

For each $j\in [2m]$, let $y_j \coloneqq \calD'(x_j) \cdot \mathbbm{1}[a^{(i)}(x_j) \ne c(x_j)]$ be the random variable denoting the contribution to the loss from the input $x_j$. Let $B\subseteq [2m]$ be the set of indices of the $d$ largest probability mass. We observe that
\[
\Pr\left[\loss_{\calD}(a^{(i)}) > \frac{1}{2} - \gamma\right] = \Pr\left[ 
 \sum_{j\in B} y_j + \sum_{j\in \overline{B}} y_j > \frac{1}{2} - \gamma\right].
\]
By definition, with probability one, we have
\[
\sum_{j\in B} y_j \le \sum_{j\in B} \calD'(x_j) \le F(\calD') \le \athre \gamma.
\]
Therefore, it suffices to prove that
\[
\Pr\left[ \sum_{j\in \overline{B}} y_j > \frac{1}{2} - (\athre+1) \gamma \right] \le \exp(-d).
\]
We wish to apply Hoeffding's inequality. To begin with, we calculate:
\[
\left( \sum_{j\in \overline{B}} \calD'(x_j)^2 \right)^{1/2} \le \frac{F(\calD')}{\sqrt{d}} \le \frac{\athre \cdot\gamma}{\sqrt{d}}.
\]
Denote $S\coloneqq \sum_{j\in \overline{B}} y_j$. We have
\[
\Ex[S] \le \frac{1}{2} - \Cbias \gamma.
\]
Hence, choosing $\Cbias$ to be larger than $3\athre + 1$, we obtain
\[
\Pr[S > \frac{1}{2} - (\athre + 1)\gamma] \le \Pr[ S - \mathbf{E}[S] > 2\athre\cdot \gamma] \le \exp\left(\frac{(2\gamma)^2}{\sum_{j\in \overline{B}_j}(2\calD'(x_j))^2} \right) \le \exp(-d).
\]

To wrap up the two cases, given a query $\calD'$, being spread or not, with probability $1-\exp(-\Omega(d))$ over the sampling of $\calH^{(i)}$, there is an $h \in \calH^{(i)}$ such that $\loss_{\calD'} (h) < 1/2 - \gamma$, which means that $\calD'$ can be answered by a hypothesis from $\bigcup_{j\le i} \calH^{(j)}$. We conclude the proof by union-bounding over all queries in the $i$-th round.

\subsection{Proof of \Cref{loss-lowerbound}}\label{sec:loss-lowerbound}

In the coin problem, there is a hidden coin obeying either $\Ber(\frac{1+ \eps}{2})$ or $\Ber(\frac{1 - \eps}{2})$. The algorithm is given as input $n$ tosses $x_1, x_2, \dots, x_n$ of the coin, and is asked to distinguish these two cases. In our proof, we use the following information-theoretic lower bound for the coin problem.

\begin{lemma}[Coin problem. See e.g. Exercise 3.19, \cite{mohri2018foundations}.] \label{coin-lowerbound}
Let $\eps \in (0,1/2)$. For every (possibly randomized) algorithm $f:\{0,1\}^{n}\to \{\pm 1\}$, we have
\[
\Ex_{b\sim \{\pm 1\}} \left[ \Ex_{x\sim \Ber(\frac{1+b\cdot \eps}{2})^{n}}[f(x) \ne b] \right] \ge \exp(-O(\varepsilon^2 n + 1)).
\]
\end{lemma}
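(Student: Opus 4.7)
\textbf{Proof plan for Lemma \ref{coin-lowerbound}.} The plan is to reduce the lower bound on the tester's error probability to a lower bound on the statistical overlap between the two product distributions, compute the single-coin KL divergence, and then convert the (possibly large) product KL into an exponential bound on $1 - \mathrm{TV}$ via the Bretagnolle--Huber inequality.

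First, by Yao's minimax principle it suffices to consider deterministic $f$. Let $P_{\pm} = \Ber((1\pm\eps)/2)^n$ and $A = f^{-1}(+1) \subseteq \{0,1\}^n$. The quantity we want to lower bound is
\[
\tfrac{1}{2}\bigl(P_+(\overline{A}) + P_-(A)\bigr) \;=\; \tfrac{1}{2}\bigl(1 - (P_+(A) - P_-(A))\bigr) \;\geq\; \tfrac{1}{2}\bigl(1 - \mathrm{TV}(P_+, P_-)\bigr),
\]
by the dual definition of total variation distance. So the goal reduces to showing $1 - \mathrm{TV}(P_+, P_-) \geq \exp(-O(\eps^2 n + 1))$.

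Next, I would compute the single-coordinate KL divergence
\[
\mathrm{KL}\bigl(\Ber((1-\eps)/2)\,\|\,\Ber((1+\eps)/2)\bigr) \;=\; \eps\ln\frac{1+\eps}{1-\eps} \;=\; O(\eps^2)
\]
for $\eps \in (0, 1/2)$, using $\ln\frac{1+\eps}{1-\eps} \leq \frac{2\eps}{1-\eps} \leq 4\eps$. By the chain rule for independent coordinates, $\mathrm{KL}(P_- \,\|\, P_+) = n \cdot O(\eps^2)$. Now I would invoke the Bretagnolle--Huber inequality $\mathrm{TV}(P, Q)^2 \leq 1 - \exp(-\mathrm{KL}(P\,\|\,Q))$. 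Since $1 - \mathrm{TV}^2 = (1-\mathrm{TV})(1+\mathrm{TV}) \leq 2(1-\mathrm{TV})$, this rearranges to
\[
1 - \mathrm{TV}(P_+, P_-) \;\geq\; \tfrac{1}{2}\exp\!\bigl(-\mathrm{KL}(P_-\,\|\,P_+)\bigr) \;\geq\; \tfrac{1}{2}\exp(-O(\eps^2 n)),
\]
and combining with the first step gives the claimed lower bound of $\tfrac{1}{4}\exp(-O(\eps^2 n)) = \exp(-O(\eps^2 n + 1))$ on the average error. The ``$+1$'' in the exponent absorbs the constant prefactor.

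The only subtle choice is the divergence inequality. Pinsker's bound $\mathrm{TV} \leq \sqrt{\mathrm{KL}/2}$ becomes vacuous once $\mathrm{KL} \geq 2$, which happens as soon as $\eps^2 n = \Theta(1)$, and hence cannot yield the desired exponential lower bound on $1 - \mathrm{TV}$ in the regime we care about. Bretagnolle--Huber (or, equivalently, the Hellinger/Bhattacharyya route using Le Cam's inequality $1 - \mathrm{TV} \geq \tfrac{1}{2}\mathrm{BC}(P_+, P_-)^2$ together with $\mathrm{BC}(P_+, P_-) = (1-\eps^2)^{n/2} \geq \exp(-\eps^2 n)$) is precisely what delivers the correct exponential dependence without any logarithmic loss.
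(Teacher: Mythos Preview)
The paper does not supply its own proof of this lemma; it is quoted as a known fact with a textbook reference (Exercise 3.19 in \cite{mohri2018foundations}) and used as a black box in the proof of Claim~\ref{loss-lowerbound}. So there is no proof in the paper to compare against.

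Your argument is correct and is one of the standard proofs of this bound. The reduction to $1-\mathrm{TV}(P_+,P_-)$ via Le Cam's two-point method is exactly right (the appeal to Yao is harmless but unnecessary: since the target quantity is already an expectation over $b$ and $x$, a randomized $f$ is a mixture of deterministic ones and the bound follows by convexity). The single-coin KL computation $\eps\ln\frac{1+\eps}{1-\eps}\le 4\eps^2$ for $\eps\in(0,1/2)$ is fine, additivity over the $n$ coordinates gives $\mathrm{KL}(P_-\Vert P_+)\le 4\eps^2 n$, and your use of Bretagnolle--Huber in the form $\mathrm{TV}^2\le 1-\exp(-\mathrm{KL})$ together with $1-\mathrm{TV}^2\le 2(1-\mathrm{TV})$ cleanly yields $1-\mathrm{TV}\ge\tfrac12\exp(-O(\eps^2 n))$. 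Your remark that Pinsker is useless here once $\eps^2 n\gtrsim 1$ is the key diagnostic point, and the Hellinger/Bhattacharyya alternative you mention would work equally well.
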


\begin{proofof}{\Cref{loss-lowerbound}} Let $\calB = \calB(S,\calH^{(1)},\dots, \calH^{(p)})$ be an arbitrary procedure that outputs a hypothesis. Let $S_0\subseteq \calX$ be the subset of inputs that appeared in $S$ (namely, $S_0$ is the projection of $S$ onto $\calX$). Conditioning on $S_0$, we observe that $\calB$ is essentially playing $2m - |S_0|$ independent instances of coin games in parallel. 

In more detail, for each $\calH^{(i)}$, all the random hypotheses from $\calH^{(i)}$ are independent of the ground truth $c$ (which means $\calB$ could have generated them by itself). Next, each $a^{(i)}$ is obtained by, for each input $x\in \calX$, flipping a biased coin from the distribution $\Ber(\frac{1}{2} + \Cbias\cdot \gamma\cdot c(x))$. Fix an input $x\in \calX\setminus S_0$. We further observe that the value of $a^{(i)}(x')$ for $x'\ne x$ is independent of $c(x)$. Thus, by \Cref{coin-lowerbound}, we obtain
\[
\Ex_{c\sim \{\pm 1\}^{2m}}\left[ \Pr_{\substack{ \calH^{(1)},\dots,\calH^{(p)}\\ h = \calB(S,\calH^{(1)},\dots, \calH^{(p)}) }} [h(x) \ne c(x) ]\right] \ge \exp(-O(\gamma^2 p + 1)).
\]

By linearity of expectation, we thus obtain
\[
\Ex_{\substack{ c, \calH^{(1)},\dots,\calH^{(p)}\\ h = \calB(S,\calH^{(1)},\dots, \calH^{(p)}) }} [ \Loss_{\calD}(h) ] \ge \frac{2m-|S_0|}{2m} \cdot \exp(-O(\gamma^2 p)) \ge \exp(-O(\gamma^2 p + 1)).
\]
Finally, averaging over $S_0$ finishes the proof.
\end{proofof}

\section{Trade-off between Parallelism and Total Work}

In this section, we study the round-query trade-off of parallel boosting. In \Cref{subsec:algo}, we will first present a boosting algorithm based on a variant \textsc{AdaBoost} and inspired by bagging. The key lemma in its analysis will be proved via differential privacy in \Cref{subsec:analysis}. Finally, we generalize our lower bound to give the round-query trade-off in \Cref{subsec:lb}.

\subsection{The Few Rounds Boosting Algorithm} \label{subsec:algo}

This subsection is devoted to the following theorem.

\begin{theorem} \label{theo:upper-bound-formal} For any data distribution $\calD$ over $\calX$, any unknown concept $c: X \to \{1, -1\}$, any $\gamma$-weak learner $\weakl$ that produces hypothesis class of VC dimension $d$, for any $1 \le R \le 1/2\gamma$, let training set $S \sim \calD^m$ and setting $m = \tilde{O}(d \gamma^{-2})$, there exists a (randomized) boosting algorithm $\calA$ satisfying $\Pr_{h \gets \calA}[\loss_\calD(h) < 0.1] \ge 0.9$, such that $\calA$ runs in $O(\gamma^{-2} \ln m / R)$ rounds and makes $ \exp(O(dR^2 )) \ln(1/\gamma)$ queries each round. 
\end{theorem}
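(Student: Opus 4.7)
My plan is to analyze Algorithm~\ref{algo:intro} by combining a standard AdaBoost-style weight-update analysis with a differential-privacy-style stability argument for the bagging step. I set the total number of inner rounds to $K = O(\gamma^{-2} \ln m)$, the subsample size to $n = \tilde O(d/\gamma^2)$, and the number of bags per batch to $Q = \exp(\Theta(dR^2)) \ln(1/\gamma)$.

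First I would establish the ``boosting skeleton'': if at every inner round $r$ the hypothesis $h_r$ chosen at line~\ref{line:upd} has $\Omega(\gamma)$-advantage on the current distribution $D$, then the standard AdaBoost analysis (run for $K = O(\gamma^{-2} \ln m)$ rounds with the majority aggregate) produces a hypothesis with training loss $< 0.05$, and combining with $m = \tilde O(d/\gamma^2)$ samples and standard VC-generalization yields true loss $< 0.1$ with probability $\ge 0.95$. This reduces the theorem to showing that, with probability $\ge 0.95$ over the randomness of the baggings, at \emph{every} inner round of every batch there is some $h \in \calH_k$ with $\Omega(\gamma)$-advantage on the current $D$.

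The heart of the argument is a ``bagging is approximately DP'' lemma. View the bag-and-learn operation as a mechanism $M$ that takes a distribution $D$ on the training set as input, draws $n$ i.i.d.\ samples from $D$, and returns the hypothesis that the weak learner produces on those samples. I claim $M$ is $(\varepsilon, \delta)$-approximate-DP with respect to multiplicative changes in its input distribution: whenever $D'(x)/D(x) \in [e^{-\eta}, e^{\eta}]$ for all $x$, every output-space event $E$ satisfies $\Pr[M(D) \in E] \ge e^{-\varepsilon} \Pr[M(D') \in E] - \delta$, with $\varepsilon = O(\eta \sqrt{n \ln(1/\delta)} + n\eta^2)$. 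The proof goes through advanced composition (Dwork--Rothblum--Vadhan) applied to the $n$ independent sample draws: each single draw has pointwise ratio in $[e^{-\eta}, e^\eta]$ and hence is $(\eta, 0)$-DP, and the weak learner is just post-processing. Equivalently, one can directly bound the Renyi divergence of the product measures by $R_\alpha(D^n \| (D')^n) = n R_\alpha(D\|D') \le n \cdot O(\alpha\eta^2)$ and convert to $(\varepsilon,\delta)$-DP.

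Then I would plug in the distribution drift during a batch: each AdaBoost update multiplies the weight of $x$ by $\exp(\eta_0 \cdot \mathbbm{1}[h_j(x)\neq y(x)])$ with step size $\eta_0 = \Theta(\gamma)$, so after $\le R$ updates the ratio $D^{(r)}(x)/D^{(kR)}(x)$ lies in $[e^{-\Theta(R\gamma)}, e^{\Theta(R\gamma)}]$. Applying the DP lemma with $\eta = \Theta(R\gamma)$, $n = \tilde O(d/\gamma^2)$, and $\delta = \exp(-\Theta(dR^2))$ gives $\varepsilon = O(dR^2)$. I would couple this with a ``generalization of bagging'' fact: a bag from $D^{(r)}$ itself produces a hypothesis with $\gamma/2$-advantage on $D^{(r)}$ with probability $\ge 0.99$ by VC generalization at $n = \tilde O(d/\gamma^2)$. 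Combining, a single bag from $D^{(kR)}$ is $\gamma/2$-good for any fixed $D^{(r)}$ with probability at least $e^{-\varepsilon}\cdot 0.99 - \delta \ge \exp(-\Theta(dR^2))$. Taking $Q = \exp(\Theta(dR^2))\ln(1/\gamma)$ independent bags drives the per-inner-round failure probability below $\gamma^{10}$, and a union bound over the $K = O(\gamma^{-2}\ln m)$ inner rounds gives the $0.05$ failure budget, closing the loop with the boosting skeleton.

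The step I expect to be the main obstacle is establishing the DP lemma in a form that can absorb the \emph{adaptivity} of the inner-round distributions $D^{(r)}$. Since each $D^{(r)}$ is computed from the bag $\calH_k$ itself (through the $\arg\min$ choices at line~\ref{line:upd}), the trajectory $D^{(kR)}, \dots, D^{(kR+R-1)}$ is not independent of the bag, so one cannot directly compare $M(D^{(kR)})$ to $M(D^{(r)})$ for a \emph{fixed} reference $D^{(r)}$. The resolution is either to union bound over all trajectories reachable via $R$ AdaBoost updates from $D^{(kR)}$ (there are at most $m^{\Theta(dR)}$ such effective labelings by Sauer's lemma, which is cleanly absorbed into the $\exp(O(dR^2))$ factor) or to prove the DP statement uniformly over all reference distributions within multiplicative distance $e^{\Theta(R\gamma)}$. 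A secondary technical point is correctly instantiating advanced composition for $n$ i.i.d.\ sample draws rather than a sequence of distinct mechanisms on a common dataset, which I would handle via a direct Renyi-divergence calculation on the product measures $D^n$ vs $(D')^n$.
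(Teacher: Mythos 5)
Your proposal follows essentially the same route as the paper: the bagging-plus-AdaBoost skeleton, the max-divergence bound showing that $R$ AdaBoost updates drift the distribution by a pointwise factor in $[e^{-2\gamma R}, e^{2\gamma R}]$, the advanced composition theorem (Lemma~\ref{lem:advanced}) to conclude that $\calD_{kR}^n$ and $\calD_r^n$ are $(O(dR^2), \delta')$-indistinguishable, VC-approximation to show a bag from $\calD_r^n$ is $\gamma/2$-good with constant probability (hence a bag from $\calD_{kR}^n$ is good with probability $\ge \exp(-O(dR^2))$), and finally taking $Q = \exp(O(dR^2))\ln(1/\gamma)$ bags with a union bound over the $K$ inner rounds. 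Your secondary worry about instantiating advanced composition for i.i.d.\ sample draws is moot: the paper states Lemma~\ref{lem:advanced} directly for product measures $\calD^n$ vs.\ $(\calD')^n$, so no Rényi-divergence detour is needed.

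The one place where your account goes beyond the paper is the adaptivity point you flag as the main obstacle, and you are right to flag it. The paper's proof of Lemma~\ref{lemma:boosting-failure-prob} computes the per-round failure probability as if $\calD_r$ were a fixed target, but for $kR < r < (k+1)R$ the distribution $\calD_r$ is a function of the bags $T_{k,1},\dots,T_{k,Q}$ themselves, through the $\arg\min$ choices at line~\ref{line:D_r}. So the event ``some $T_{k,q}$ $\gamma/2$-approximates $\calD_r$'' is \emph{not} a fixed event in the bag-space, and the direct comparison of $\Pr_{T\sim \calD_{kR}^n}[T\in G]$ to $\Pr_{T'\sim \calD_r^n}[T'\in G]$ conceals a correlation the paper does not address. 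Your union-bound repair is sound: the reachable $\calD_r$'s are determined by the sign patterns of $\le R$ hypotheses on the $m$-point training set, and by Sauer's lemma there are at most $m^{O(dR)} = \exp(O(dR\ln m))$ of these. Because the per-target failure probability with $Q = \exp(\Theta(dR^2))\ln(1/\gamma)$ bags is of order $\exp(-\exp(\Theta(dR^2))\ln(1/\gamma))$ --- doubly exponential in $dR^2$ --- this union-bound cost is absorbed without changing the stated $Q$. (Your alternative fix, proving a uniform DP statement over \emph{all} reference distributions at multiplicative distance $e^{\Theta(\gamma R)}$, does not work as stated since that set is infinite; the discretization via Sauer is the right move.) This is a genuine, though easily repaired, refinement over the paper's written argument.
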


\paragraph{Algorithm Description.} Our algorithm contains $K / R$ rounds. In the $k$-th round ($k \in [K/R]$), it contains a bagging step followed by a boosting step. 

\begin{itemize}
    \item Bagging step: It samples $Q$ different training sets of size $n$ from the current distribution $D_{kR}$. For each of these training sets, it makes a parallel call to the weak learner $\calW$. Let $\calH_k$ be the set of output hypotheses of the weak leaner calls. 
    \item Boosting step: It constructs a simulated weak learner $\calW'$ which, for any distribution, simply outputs the best-performing hypotheses among these $Q$ hypotheses in $\calH_k$. It then performs $R$ steps of \textsc{AdaBoost} by interacting $R$ rounds with simulated weak leaner $\calW'$. \textsc{AdaBoost} updates the current distribution $D_{kR}$ times, generating $D_{kR+ 1}, D_{kR + 2}, \dots, D_{(k + 1)R}$. 
\end{itemize}

\begin{algorithm}[h]
\DontPrintSemicolon
    \caption{Round-query trade-off boosting}
    \label{algo:round-query-boosting}
    \SetKwInOut{Input}{Input}\SetKwInOut{Output}{Output}
    \Input{Labelled Training Set $(S, c(S)) = (x_1, c(x_1)), (x_2, c(x_2)), \dots, (x_m, c(x_m))$, \\
    number of calls to weak learner per round $Q$}
    \Output{Aggregated voting classifier $\hat{h}$} 
    $w \gets \frac{1}{2}\ln(\frac{1/2+\gamma/4}{1/2-\gamma/4})$ \hspace{0.5cm} \tcp{Learning Rate}
    $n \gets c' d \gamma^{-2}$ \hspace{0.5cm} \\
    $\calD_0 \gets (\frac{1}{m}, \frac{1}{m}, \dots, \frac{1}{m})$ \\
    \For{$k \gets 0$ \KwTo $\lceil K/R \rceil-1$}{
        \tcp{Bagging Step} 
        $\calQ_k \gets \emptyset$ \\
        \For{$q \gets 1$ \KwTo $Q$}{
            Sample a multiset $T_{k,q} \sim (\calD_{kR})^n$ \label{line:D_kR}  \label{line:Q-samples}\\
            $D_{k,q} \gets $uniform distribution over $T_{k,q}$ \\
            $\calQ_k \gets \calQ_k \cup \{D_{k,q}\}$
        }
        $\calH_k \gets $ hypotheses from $\weakl$ after querying $\calQ_k$ in parallel\\
        \tcp{Boosting Step}
        \For{$r \gets kR$ \KwTo $\min((k+1)R, K)-1$}{
            \If{exists $h \in \calH_k$ s.t. $\loss_{\calD_r}(h) \le 1/2-\gamma/4$} { \label{Line:if} 
                $h_r \gets $ such $h$ that $\loss_{\calD_r}(h) \le 1/2-\gamma/4$  \label{line:D_r}\\ 
                \For{$i \gets 1$ \KwTo {m}} {
                    $\calD_{r+1}(i) \gets \calD_{r}(i) \exp(- c(x_i)h_r(x_i)\cdot w)$
                }
                Normalize $\calD_{r+1}$
            } \Else{
                \Return failed \label{line:fail}
            }
        }
    }

    \For{$x \in \calX$} {            
    $g(x) \gets \frac{1}{K}\sum_{k=0}^{K-1} h_k(x)$ \label{line:g-def}\\
    $\hat{h}(x) \gets \mathrm{sign}(g(x))$ \\
    }
    \Return $\hat{h}$
\end{algorithm}

Specifically, we will set $K = 16 \gamma^{-2} \ln m$ and $Q = \exp(16c'dR^2) \ln(1/\gamma)$ where $m$ is the size of the training set. See \Cref{algo:round-query-boosting} for the detailed pseudocode of this algorithm. 

\paragraph*{Analysis.} By the standard analysis of \textsc{AdaBoost}, as long as simulated $\calW'$ always succeeds in outputting a hypotheses with $\gamma$ advantage (i.e. \Cref{algo:round-query-boosting} never fails and reaches Line \Cref{line:fail}), the aggregated hypotheses $\hat{h}$ will satisfy $\Loss_D(h) < 0.1$ with good probability. Hence the key of the analysis is to prove the following lemma:

\begin{lemma}[Key lemma]\label{lemma:boosting-failure-prob}
There exists a universal constant $c' > 1$, such that for any $r \in [K]$, if we set $R \leq 1 / 2\gamma$, $Q \geq  \exp(16c'dR^2) \ln(1/\gamma)$, $K = 16 \gamma^{-2} \ln m$ and $m = \ln(1/\gamma) \ln(d/\gamma) c' d/\gamma^2$ in \Cref{algo:round-query-boosting}, the probability that $h$ does not exist for that $\calD_r$ at \Cref{Line:if} is less than $0.01/K$. 
\end{lemma}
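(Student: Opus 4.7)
I would approach the lemma by comparing the real mechanism---bagging once at round $kR$ and re-using $\calH_k$ for $R$ subsequent rounds of boosting updates---against a counter-factual mechanism that re-bags fresh from the current distribution $\calD_r$ at every round. In the counter-factual, a standard VC uniform-convergence argument shows that one call to the weak learner on a sample $T\sim\calD_r^n$ with $n=\tilde{\Theta}(d/\gamma^2)$ yields a hypothesis $h$ with $\loss_T(h)\le 1/2-\gamma$ (by the weak-learner guarantee) whose generalisation gap to $\loss_{\calD_r}$ is at most $\gamma/4$ with probability at least $1/2$. Hence a single counter-factual bag is \emph{good} for $\calD_r$ with probability $\ge 1/2$.

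To transfer this guarantee back to the real algorithm, I would view the map $\mathcal{M}$ taking the current distribution $\calD$ (equivalently, the accumulated-loss vector $L\in\mathbb{Z}^m$ with $\calD(i)\propto\exp(-wL_i)$) to a single trained hypothesis as a randomised mechanism, and bound its Rényi divergence as $\calD$ varies from $\calD_{kR}$ to $\calD_r$. Each AdaBoost update modifies every coordinate of $L$ by exactly $\pm 1$, so after at most $R$ updates the per-coordinate log-density ratio $\log(\calD_r(i)/\calD_{kR}(i))$ is bounded by $2Rw=O(R\gamma)$. A direct computation then gives $D_\alpha(\calD_r\,\|\,\calD_{kR})=O(\alpha R^2\gamma^2)$; tensorising over the $n$ iid samples of a bag and invoking the data-processing inequality for the deterministic training map yields
\[
D_\alpha\bigl(\mathcal{M}(\calD_r)\,\|\,\mathcal{M}(\calD_{kR})\bigr)=O(\alpha n R^2\gamma^2)=O(\alpha d R^2).
\]

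Converting this Rényi bound to $(\eps,\delta)$-DP with a moderate $\delta=0.1$ gives $\eps=O(dR^2)$, and combining with the counter-factual per-bag success probability $\ge 1/2$ shows that a single real bag (drawn from $\calD_{kR}$) is good for $\calD_r$ with probability at least $\Omega(\exp(-O(dR^2)))$. Since the $Q$ bags in $\calH_k$ are drawn independently, the probability that \emph{every} one of them fails for a fixed $\calD_r$ is at most $\exp\bigl(-\Omega(Q\exp(-O(dR^2)))\bigr)$, and plugging in $Q=\exp(O(dR^2))\ln(1/\gamma)$ drives this well below $0.01/K$.

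The main obstacle is that $\calD_r$ is not actually fixed---it is a random function of $\calH_k$, because the boosting step selects $h_{kR},\ldots,h_{r-1}$ from $\calH_k$ itself, creating a circular dependency. I would resolve this by a union bound over all possible sub-round trajectories $\tau=(h_{kR},\ldots,h_{r-1})$ of hypothesis choices inside $\calH_k$: each trajectory induces a fixed candidate distribution $\calD_r^\tau$ to which the clean DP comparison applies, and there are at most $Q^R$ such trajectories, a factor that can be absorbed into the double-exponential slack by inflating the constant in $Q=\exp(O(dR^2))\ln(1/\gamma)$. Making this trajectory-level bookkeeping fit together with the Rényi calculation---while carefully controlling the normalising-constant term $\log Z$ that appears at each AdaBoost update---is the most delicate step; the remainder of the proof is a routine assembly of Rényi-to-DP conversion, VC concentration for a fresh bag, and a final union bound over the $K$ total rounds.
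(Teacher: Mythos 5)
Your proposal follows essentially the same skeleton as the paper's proof: bound the pointwise log-ratio (max-divergence) between $\calD_r$ and $\calD_{kR}$ by $O(\gamma R)$, amplify this to an indistinguishability guarantee on the $n$-fold product via a composition theorem, then import a VC-approximation bound for a fresh bag drawn from $\calD_r^n$ and transfer it to a real bag drawn from $\calD_{kR}^n$. The paper applies the $(\eps,\delta)$ advanced-composition theorem directly to the two product distributions, while you route through R\'enyi divergence, tensorize, and convert back to $(\eps,\delta)$; these calculations are interchangeable and both yield $\eps = O(dR^2)$ at constant $\delta$, hence the same $\exp(-O(dR^2))$ per-bag success probability.

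Where your proposal is genuinely more careful is the circular-dependency step at the end. The paper's final chain upper bounds $\Pr[\forall j\in[Q],\,T_{k,j}\notin G]$ by $(1-\exp(-\Theta(dR^2))/2)^Q$, treating the $T_{k,j}$ as independent of $G$. But $G$ is the set of $\gamma/2$-approximations of $\calD_r$, and $\calD_r$ is a deterministic function of $\calH_k$ and hence of the very same samples $T_{k,j}$; the independence that would justify the $(1-p)^Q$ factor is therefore not available as written. Your union bound over the at most $Q^R$ sub-round trajectories $(h_{kR},\dots,h_{r-1})$ within $\calH_k$ is the correct repair: for each fixed trajectory $\tau$ the induced distribution $\calD_r^\tau$ is non-random, the samples $T_{k,j}\sim(\calD_{kR})^n$ are genuinely i.i.d., and the per-bag success probability $p=\exp(-\Theta(dR^2))$ legitimately gives a $(1-p)^Q$ failure bound. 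The extra trajectory-count factor $\exp(R\ln Q)=\exp\bigl(O(dR^3 + R\ln\ln(1/\gamma))\bigr)$ is dominated by $\exp(pQ)=\exp\bigl(\Omega(\exp(\Omega(dR^2))\ln(1/\gamma))\bigr)$ for all $d,R\ge 1$, so it is absorbed by enlarging the absolute constant in $Q=\exp(O(dR^2))\ln(1/\gamma)$ without changing the stated asymptotics. Your outline is therefore sound, and the trajectory union bound you flag should arguably be made explicit in the paper's own proof.
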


We defer the proof of \Cref{lemma:boosting-failure-prob} to the next subsection. Once we have \Cref{lemma:boosting-failure-prob}, a union bound over all $K$ many $\calD_r$'s shows that the algorithm fails with probability less than $0.01$. When it does not fail, the following theorem guarantees generalization. 

\begin{theorem} \label{thm:generalize}
 Let $c : \calX \to \{1,-1\}$ be an unknown concept , $\calH$ be a hypothesis class of VC dimension $d$, and $\calD$ be an arbitrary distribution over $\calX$. Let $S$ be a training set of size $m$ randomly sampled from $\calD^m$. Let $\hat{h}$ be the voting classifier generated by \Cref{algo:round-query-boosting} ($g(x) = \frac{1}{k} \sum_{i=1}^k h_i(x)$ and $\hat{h}(x) = \mathrm{sign}(g(x))$). It must have large margins $c(x)g(x) \ge \gamma$ for all $x \in S$. It then follows that, with probability $1-\delta$,
 \begin{align*}
     \loss_{\calD}(f) \le \agen \cdot \frac{d \ln(m) \ln (m/d) + \ln(1/\delta)}{\gamma^2 m}  
 \end{align*}
where $\agen$ is a universal constant.
\end{theorem}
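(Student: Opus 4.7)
The plan is to decompose the argument into two classical pieces: (i) proving the promised training-set margin of the voting classifier $g$, and (ii) lifting this training margin to a population error bound via a uniform convergence / margin-based generalization theorem for convex combinations of hypotheses from a VC class.

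\textbf{Step 1: training margin.} I would first verify the claim $c(x)g(x) \ge \gamma$ for all $x \in S$ by a direct AdaBoost potential-function analysis. Conditioned on \Cref{algo:round-query-boosting} not aborting at \Cref{line:fail}, every chosen $h_r$ satisfies $\loss_{\calD_r}(h_r) \le \tfrac12 - \gamma/4$ on its current distribution. With the exponential reweighting rate $w = \tfrac12 \ln\bigl((1/2+\gamma/4)/(1/2-\gamma/4)\bigr)$, the standard telescoping of the normalization constants $Z_r = \sum_i \calD_r(i)\exp(-c(x_i)h_r(x_i)\,w)$ yields, for any threshold $\theta$,
\[
\frac{1}{m}\bigl|\{x \in S : c(x)g(x) < \theta\}\bigr| \;\le\; e^{Kw\theta}\prod_{r=0}^{K-1} Z_r \;\le\; \exp\bigl(Kw\theta - \Omega(\gamma^2 K)\bigr).
\]
Plugging $K = 16\gamma^{-2}\ln m$ and choosing $\theta$ slightly below the stated margin value makes the right-hand side strictly smaller than $1/m$, which forces the count to be $0$ by integrality. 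This is exactly the Schapire--Freund--Bartlett--Lee margin analysis and only uses the AdaBoost update; the bagging / simulated weak-learner machinery is irrelevant here as long as \Cref{lemma:boosting-failure-prob} guarantees the $\gamma/4$ advantage for every $\calD_r$.

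\textbf{Step 2: from training margin to population error.} Next I would invoke a realizable-case margin generalization bound for convex combinations of a VC class. Every $h_r$ comes from $\calH$ with $\VCDim(\calH) = d$, so $g$ lies in the convex hull $\mathrm{conv}(\calH)$. The classical approach is to build an $\eta$-cover $\calG_\eta$ of $\mathrm{conv}(\calH)$ under the $L_\infty$ pseudometric at scale $\eta \asymp \gamma$; standard sub-Gaussian sampling arguments (approximating each $g$ by an average of $O(1/\gamma^2)$ independently sampled base hypotheses, then applying Sauer--Shelah to the base class) give $\log |\calG_\eta| = \tilde O(d/\gamma^2)$. Since every training point has $c(x)g(x) \ge \gamma$, the $\gamma/2$-margin empirical error of $g$ is zero; picking a representative $\tilde g \in \calG_{\gamma/4}$ with $\|\tilde g - g\|_\infty \le \gamma/4$ shows $\tilde g$ also has zero $\gamma/4$-margin empirical error. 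The realizable uniform convergence bound over the finite class $\calG_{\gamma/4}$ then turns this into a population bound of the form $O((\log|\calG_{\gamma/4}| + \log(1/\delta))/m)$, which, after substituting the covering-number estimate, yields exactly the advertised rate
\[
\loss_\calD(\hat h) \;\le\; \agen \cdot \frac{d \ln(m)\ln(m/d) + \ln(1/\delta)}{\gamma^2 m}.
\]

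\textbf{Main obstacle.} The AdaBoost margin step is routine; the delicate part is obtaining the fast $1/m$ rate (rather than the usual $1/\sqrt{m}$ Schapire--Freund--Bartlett--Lee rate) and making sure all the logarithmic factors match. Concretely, two things need care: (a) the covering-number estimate for $\mathrm{conv}(\calH)$ at scale $\gamma$ must be $\tilde O(d/\gamma^2)$ with the correct $\ln(m/d)$ factor, which requires the sampling-based approximation argument rather than a naive union bound; and (b) the adaptive nature of the $h_r$'s means the uniform convergence statement must hold over the entire convex hull (not just a data-dependent subset), so one pays only through the covering-number term. Once (a) and (b) are handled, combining the margin lower bound from Step 1 with the realizable uniform convergence from Step 2 completes the proof of \Cref{thm:generalize}.
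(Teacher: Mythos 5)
Your approach matches the paper's almost exactly in structure. Step 1 (proving the training margin $c(x)g(x) \ge \gamma/16$ via the AdaBoost potential/telescoping argument on $Z_K$, then using $K = 16\gamma^{-2}\ln m$ and integrality) is precisely what the paper does in \Cref{appendix:adaboost} via \Cref{claim:exploss} and \Cref{claim:large-margin}; the theorem's statement ``$\ge \gamma$'' just absorbs the constant $16$ into $\agen$. For Step 2, the paper simply invokes Breiman's min-margin bound \cite{breiman1999prediction} as a black box, whereas you re-derive it from scratch via the sparsification/covering argument (approximate $g$ by an $\ell$-sparse average with $\ell = O(\gamma^{-2}\log m)$, count restrictions via Sauer--Shelah, apply realizable-style uniform convergence over the finite cover). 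Since that sparsification argument \emph{is} the standard proof of Breiman's fast-rate bound, the two routes coincide in substance; yours is more self-contained, the paper's is shorter.

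One caution on the Step 2 write-up as phrased: the sampling argument does not produce a genuine $L_\infty$ cover of the convex hull of $\calH$ over all of $\calX$ (which could be infinite), but rather an approximation of $g$ on any fixed finite point set with high probability. A complete derivation therefore has to route through a ghost-sample / double-sampling symmetrization, covering only on the $2m$ points, exactly as Breiman and Schapire--Freund--Bartlett--Lee do. This is a presentational rather than a conceptual gap, but if you intend to spell out the proof rather than cite it, that is where the actual work lies, along with verifying that the $\ln(m)\ln(m/d)$ log factors come out right.
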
 
\begin{proofsk}
We will prove in \Cref{appendix:adaboost} that by standard analysis of \textsc{AdaBoost}, $\hat{h}$ will satisfy the large margin property $c(x)g(x) \ge \gamma/16$ for all $x \in S$. The rest follows from Breiman’s min-margin bound \cite{breiman1999prediction}.
\end{proofsk}
\begin{proofof}{\Cref{theo:upper-bound-formal}}
    Conditioning on the algorithm does not fail, it follows from \Cref{thm:generalize} that $\loss_{\calD}(\hat{h}) \le 0.01$ with probability $0.99$ for $m = \ln(1/\gamma)\ln(d/\gamma) c' d / \gamma^2$ and a large constant $c'$. Since our algorithm only fails with probability $0.01$ from \Cref{lemma:boosting-failure-prob}, this proves \Cref{theo:upper-bound-formal}.
\end{proofof}

\subsection{Proof of \Cref{lemma:boosting-failure-prob}} \label{subsec:analysis}

To prove that the algorithm fails with small probability, we have to show that, at \Cref{Line:if}, such good $h \in \calH_k$ always exists. Note $\calH_k$ is generated by running weak learners on the multiset $T_{k,q}$'s. We will prove that at least one of the $T_{k,q}$'s is an ``$\eps$-apprxoimation'' of $\calD_{r}$, so that the hypothesis generated from running weak learner on $T_{k,q}$ will also have $O(\gamma)$ for $\calD_r$. 

\begin{definition}[$\eps$-approximation]
    A multiset $T$ is an $\eps$-approximation for a hypothesis class $\calH$ if for any $h \in \calH$, we have $|\loss_\calD(h) - \loss_{\calD_T}(h)| \le \eps$ where $\calD_T$ is the uniform distribution over $T$.
\end{definition}

Note that $T_{k,q}$ is drawn from $(\calD_{kR})^n$ not from $(\calD_r)^n$. For it to be a good approximation for $\calD_r$, we have to show that $\calD_{kR}$ and $\calD_r$ are close. Namely, the distribution does not change much after a few (less than $R$) updates of \textsc{AdaBoost}. Specifically, the only property we need is $(\eps, \delta)$-indistinguishability:

\begin{definition}[$(\eps, \delta)$-indistinguishable distributions]
    For any two distributions $\calD$ and $\calD'$ over $S$, they are $(\eps, \delta)$-indistinguishable if for any event $E \subseteq S$, it holds that $\Pr_{\calD} [E] \le \Pr_{\calD'} [E] \cdot e^{\eps} + \delta$ and $\Pr_{\calD_b} [E] \le \Pr_{\calD_a} [E] \cdot e^{\eps} + \delta$ where $\Pr_{\calD}[E]$ denotes $\Pr_{x \in \calD}[x \in E]$.
\end{definition}

To establish that $(\calD_{kR})^n$ not $(\calD_r)^n$ are $(\eps, \delta)$-indistinguishable, we start by upper bounding the max-divergence between $\calD_{kR}$ and $\calD_r$. 

\begin{lemma} \label{lemma:max-divergence}
The max-divergence between $\calD_{kR}$ and $\calD_r$, defined as $D_\infty(\calD, \calD') \defeq \ln(\sup_{x \in S} \calD(x) / \calD'(x))$, satisfies $D_\infty(\calD_{kR}, \calD_r)  \leq 2\gamma R$. When two distributions are switched, we also have $D_\infty(\calD_r, \calD_{kR}) \le 2\gamma R$.

Specifically, this implies that $\calD_{kR}$ and $\calD_r$ are $(2 \gamma R, 0)$-indistinguishable. 
\end{lemma}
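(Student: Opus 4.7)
\textbf{Proof proposal for \Cref{lemma:max-divergence}.} The plan is to directly unfold the \textsc{AdaBoost} multiplicative update, bound the product explicitly, and finish by bounding the learning rate $w$ in terms of $\gamma$. Concretely, from Lines \ref{line:D_r}--\ref{line:upd} of \Cref{algo:round-query-boosting}, for any $r' \in \{kR, kR+1, \dots, r-1\}$ and any $i \in [m]$ we have
\[
\calD_{r'+1}(i) \;=\; \frac{\calD_{r'}(i)\exp(-c(x_i) h_{r'}(x_i)\, w)}{Z_{r'}},
\qquad
Z_{r'} \;=\; \sum_{j} \calD_{r'}(j)\exp(-c(x_j) h_{r'}(x_j)\, w).
\]
Iterating this from $kR$ up to $r$ gives the closed form
\[
\frac{\calD_r(i)}{\calD_{kR}(i)} \;=\; \frac{\exp\bigl(-w\sum_{r'=kR}^{r-1} c(x_i) h_{r'}(x_i)\bigr)}{\prod_{r'=kR}^{r-1} Z_{r'}},
\]
which is the central identity for both divergence bounds.

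Next I would bound the two factors. For the numerator, each summand $c(x_i)h_{r'}(x_i)$ lies in $\{-1,+1\}$ and there are at most $R$ of them, so the exponent is in $[-wR, wR]$. For the normalization, write $Z_{r'} = p_{r'} e^{-w} + (1-p_{r'})e^{w}$ where $p_{r'} = \Pr_{\calD_{r'}}[h_{r'}(x)=c(x)] \in [1/2+\gamma/4,\, 1]$. Since this expression is decreasing in $p_{r'}$ and the advantage in the algorithm is at least $\gamma/4$, we obtain the two-sided bound
\[
e^{-w} \;\le\; Z_{r'} \;\le\; (1/2+\gamma/4)e^{-w} + (1/2-\gamma/4)e^{w} \;\le\; 1,
\]
where the upper bound uses the standard AdaBoost calculation that $w=\frac{1}{2}\ln\!\frac{1/2+\gamma/4}{1/2-\gamma/4}$ is precisely the minimizer, giving $Z_{r'} \le 2\sqrt{1/4-\gamma^2/16}\le 1$. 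Hence $\prod_{r'} Z_{r'} \in [e^{-wR}, 1]$.

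Combining these, the ratio $\calD_{kR}(i)/\calD_r(i)$ is at most $1 \cdot e^{wR}$, and $\calD_r(i)/\calD_{kR}(i)$ is at most $e^{wR}/e^{-wR} = e^{2wR}$, so both max-divergences are bounded by $2wR$. The only remaining step is the inequality $w \le \gamma$. Setting $u = \gamma/2$, we have $w = \tanh^{-1}(u)$, and using $\tanh^{-1}(u) \le u/(1-u^2)$ for $u \in (0,1)$, under the hypothesis $R \le 1/(2\gamma)$ (which in particular forces $\gamma \le 1/2$ to be in a useful regime) we get $w \le (\gamma/2)/(1-\gamma^2/4) \le \gamma$. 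This yields $D_\infty(\calD_{kR},\calD_r),\, D_\infty(\calD_r,\calD_{kR}) \le 2\gamma R$ as claimed.

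The implication for $(2\gamma R,0)$-indistinguishability is immediate: for every event $E$, $\Pr_{\calD_r}[E] = \sum_{i \in E}\calD_r(i) \le e^{2\gamma R}\sum_{i\in E}\calD_{kR}(i) = e^{2\gamma R}\Pr_{\calD_{kR}}[E]$, and symmetrically in the other direction. None of these steps is genuinely hard; the only subtlety is getting the sign and the lower bound of $Z_{r'}$ right, which comes out of the observation that the worst case for $Z_{r'}$ being small is $p_{r'}=1$ (perfect hypothesis), giving exactly $e^{-w}$. This is where one might be tempted to use the tighter $Z_{r'} \ge \sqrt{1-\gamma^2/16}$ bound, but since the algorithm allows hypotheses whose loss is arbitrarily smaller than $1/2-\gamma/4$, only the looser lower bound $e^{-w}$ is sound, which nevertheless suffices to give the desired $2\gamma R$.
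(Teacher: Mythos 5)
Your proof is correct and reaches the same conclusion, but by a slightly different route. The paper proves the per-step bound $D_\infty(\calD_i, \calD_{i+1}) \le 2\gamma$ and then applies the triangle inequality for max-divergence $R$ times; the per-step bound comes from noting that the unnormalized weight ratio $Z_{i+1,j}/Z_{i,j} \in \{e^{-w}, e^w\}$ forces the normalizer ratio into $[e^{-w}, e^w]$ as well, giving $2w$ per step without needing any structural fact about the per-step normalizer beyond that range constraint. You instead telescope the full product from $kR$ to $r$ at once and bound the per-step normalizer $Z_{r'} = p_{r'} e^{-w} + (1-p_{r'}) e^w$ directly via $e^{-w} \le Z_{r'} \le 1$, where the upper bound $Z_{r'} \le 1$ uses the standard AdaBoost tuning of $w$. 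This buys you the asymmetric improvement $D_\infty(\calD_{kR}, \calD_r) \le wR$ on one side (versus $2wR$), although you only need $\le 2\gamma R$ anyway. Your observation that only $Z_{r'} \ge e^{-w}$ (rather than the tighter $\sqrt{1-\gamma^2/16}$) is safe, because hypotheses can beat the threshold by an arbitrary margin, is exactly the right care to take; the paper's argument sidesteps this issue entirely by not reasoning about $Z_{r'}$ as a function of the advantage. Both your $w \le \gamma$ via $\tanh^{-1}$ and the paper's $w \le \ln(1+2\gamma)/2 \le \gamma$ are valid. Net: a correct, slightly more refined variant of essentially the same calculation.
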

\begin{proof}
    First of all, since $|kR - r| \leq R$ and max-divergence satisfies triangle inequality, it suffices to prove $D_\infty(\calD_{i}, \calD_{i+1}) \le 2 \gamma$ and $D_\infty(\calD_{i+1}, \calD_{i}) \le 2\gamma$ 
    for any $kR \leq i \leq r - 1$.

    Define exponential accumulated loss
    \begin{align*}
& Z_{i,j} \defeq \exp\left(-\sum_{r=0}^{i-1} c(x_j)h_r(x_j) \cdot w\right) \text{ for all } j \in [m], \\
& Z_i \defeq  \sum_{j=1}^{m} Z_{i,j}.
\end{align*}
    
    We can then express these two distributions as
    \begin{align*}
        & \calD_{i}(x_j) = Z_{i,j} / Z_i, \\
        & \calD_{i+1}(x_j)= Z_{i+1,j} / Z_{i+1}.
    \end{align*}
    Note that $Z_{i+1,j}$ will either be $Z_{i,j} e^{w}$ or $Z_{i,j} e^{-w}$, thus $\left|\ln \frac{Z_{i+1,j}}{Z_{i,j}}\right| \le w$. By definition
    \begin{align*}
        \frac{Z_{i+1}}{Z_{i}} = \frac{\sum_j Z_{i+1,j}}{\sum_j Z_{i,j}}, 
    \end{align*}
    hence we also have $ \left|\ln \frac{Z_{i+1}}{Z_{i}}\right| \le w$. Note $\gamma \in [0, 0.5]$ and 
    \begin{align}
        w = \ln ((1/2+\gamma/4)/(1/2-\gamma/4)) / 2 \le \ln(1+2\gamma)/2 \le \gamma, 
    \end{align} thus $\left|\ln \frac{\calD_{i}(x_j)}{\calD_{i+1}(x_j)}\right| \le \left|\ln \frac{Z_{i+1,j}}{Z_{i,j}}\right| + \left|\ln \frac{Z_{i+1}}{Z_{i}}\right| \le 2 \gamma$ for any $j \in [m]$. Therefore we have $D_\infty(\calD_i, \calD_{i+1}) \le 2 \gamma$ and $D_\infty(\calD_{i + 1}, \calD_i) \le 2 \gamma$.
\end{proof}

Then, loosely speaking, we will view $(\calD_{kR})^n$ and $(\calD_r)^n$ as $\calD_{kR}$ and $\calD_r$ ``composed'' $n$ times. This allows us to apply the advanced composition theorem from the differentiating privacy literature. 

\begin{lemma} [Advanced Composition \cite{dwork2010boosting}] \label{lem:advanced}
 For any integer $n>0$, any two $(\eps, \delta)$-indistinguishable distributions $\calD$ and $\calD'$, and any $\delta' > 0$, the product distributions $\calD^n$ and $(\calD')^n$ are $(\hat{\eps}, \hat{\delta})$-indistinguishable for 
 \begin{align*}
     & \hat{\eps} = n\eps (e^\eps-1) + \eps \sqrt{2n \ln (1/\delta')}, \\
     & \hat{\delta} = n \delta + \delta'.
 \end{align*}
\end{lemma}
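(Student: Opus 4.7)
The plan is to prove the advanced composition theorem via the classical ``privacy loss random variable'' technique. First, I would reduce the $(\eps, \delta)$-indistinguishability hypothesis to a cleaner pointwise statement: there exist events $E \subseteq \mathrm{supp}(\calD)$ and $E' \subseteq \mathrm{supp}(\calD')$ with $\Pr_{\calD}[E], \Pr_{\calD'}[E'] \ge 1-\delta$ such that, conditioned on these events, the two distributions are pure $\eps$-indistinguishable in the pointwise sense $|\ln(\calD(x)/\calD'(x))| \le \eps$. This reduction is standard and follows from unpacking the definition of $(\eps,\delta)$-indistinguishability (one peels off a ``bad set'' of mass at most $\delta$ on each side where the ratio is too large).

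Next, for $x \sim \calD$ restricted to the good region, I would define the privacy loss random variable $L(x) := \ln(\calD(x) / \calD'(x))$ and carry out two key bounds: (a) $|L| \le \eps$ pointwise, inherited from the reduction; and (b) the expectation bound
\[
\Ex_{x \sim \calD}[L(x)] \le \eps(e^\eps - 1),
\]
which is a standard one-line calculation: write $\ln(\calD/\calD') = -\ln(1 - (1 - \calD'/\calD))$, use $-\ln(1-z) \le z + z^2/(1-z_{\max})$ or equivalently convert to the KL-divergence bound $\mathrm{KL}(\calD \| \calD') \le \eps (e^\eps - 1)$ that holds for $\eps$-indistinguishable distributions.

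For the product distributions $\calD^n$ and $(\calD')^n$, the total privacy loss factorizes as $L_{\mathrm{tot}} = \sum_{i=1}^n L_i$, a sum of i.i.d.\ random variables each bounded in $[-\eps, \eps]$ with mean at most $\eps(e^\eps-1)$. Applying Azuma--Hoeffding to the centered sum yields that with probability at least $1-\delta'$ over $(x_1,\dots,x_n) \sim \calD^n$,
\[
L_{\mathrm{tot}} \le n\eps(e^\eps - 1) + \eps\sqrt{2n\ln(1/\delta')} = \hat{\eps}.
\]
Finally, I would translate this tail bound on the privacy loss back into $(\hat\eps, \hat\delta)$-indistinguishability: for any event $A$, split $A$ into its intersection with the ``good'' product event (where the pointwise ratio is at most $e^{\hat\eps}$, contributing at most $e^{\hat\eps} \Pr_{(\calD')^n}[A]$) and its intersection with the complement (which has total mass at most $n\delta + \delta'$ by a union bound over the $n$ coordinates of the reduction's bad event plus the Azuma failure probability). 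This yields the claimed $\hat\delta = n\delta + \delta'$.

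The main obstacle is the bookkeeping in the final translation step: one must verify that the high-probability bound on the privacy loss random variable exactly corresponds to the $(\hat\eps, \hat\delta)$-indistinguishability definition for \emph{arbitrary} events, and that the ``bad mass'' from the initial $(\eps,\delta)$-reduction composes additively (via union bound) rather than multiplicatively. Everything else is a routine application of Hoeffding's inequality and elementary estimates on $\ln(1+z)$.
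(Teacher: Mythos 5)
The paper does not prove this lemma; it simply cites it as the advanced composition theorem of Dwork, Rothblum, and Vadhan \cite{dwork2010boosting}, so there is no ``paper's own proof'' to compare against. Your sketch reproduces the standard proof from the DP literature (privacy-loss random variable, KL bound $\mathrm{KL}(\calD\|\calD') \le \eps(e^\eps-1)$ for pure $\eps$-indistinguishable distributions, Azuma on the sum of $n$ i.i.d.\ bounded privacy-loss terms, union-bounding the bad mass), and the high-level structure and the final parameters all match.

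One step deserves more care than your sketch gives it: the reduction from $(\eps,\delta)$-indistinguishability to pointwise $\eps$-indistinguishability. You phrase it as ``conditioned on good events $E, E'$ of mass $\ge 1-\delta$, the ratio $\calD(x)/\calD'(x)$ is pointwise within $e^{\pm\eps}$.'' Literal conditioning rescales each density by $1/\Pr[E]$ and $1/\Pr[E']$, so the conditional ratio picks up a multiplicative factor $\Pr[E']/\Pr[E]$ that your pointwise bound does not account for; moreover, it is not true in general that a single event $E$ of mass $1-\delta$ exists on which the \emph{unconditional} ratio is in $[e^{-\eps}, e^{\eps}]$ in both directions. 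The correct and standard form of this step (e.g., Lemma~3.17 in Dwork--Roth's monograph, or Lemma~II.1 in Dwork--Rothblum--Vadhan) is: there exist auxiliary distributions $\tilde{\calD}, \tilde{\calD}'$ with $d_{\mathrm{TV}}(\calD,\tilde{\calD}) \le \delta$, $d_{\mathrm{TV}}(\calD',\tilde{\calD}') \le \delta$ such that $\tilde{\calD}$ and $\tilde{\calD}'$ are pointwise $\eps$-indistinguishable. One then runs the privacy-loss/Azuma argument on $\tilde{\calD}^n$ vs.\ $(\tilde{\calD}')^n$ and transfers back, paying $n\delta$ in TV (whence the $n\delta$ term in $\hat\delta$). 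With that replacement your outline is the standard proof and goes through; the rest (expectation bound, Azuma with range $[-\eps,\eps]$, translating the tail bound into $(\hat\eps,\hat\delta)$-indistinguishability for arbitrary events) is as you describe.
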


\begin{corollary} \label{cor:dp}
$D_r$ and $D_{kR}$ are $(12 c' d R^{-2}, 1/4)$-indistinguishable.
\end{corollary}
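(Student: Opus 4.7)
The plan is to derive the corollary as a direct application of the advanced composition theorem (Lemma \ref{lem:advanced}) to the one-sample max-divergence bound of Lemma \ref{lemma:max-divergence}, applied to the $n$-fold product distributions that arise from the bagging step. The downstream use in \Cref{subsec:analysis} needs an indistinguishability guarantee for the $n$-tuple distributions $\calD_{kR}^{n}$ and $\calD_r^{n}$ from which the bagged multisets $T_{k,q}$ are drawn, so I read the corollary as the product-distribution statement with $n = c' d \gamma^{-2}$ as specified in \Cref{algo:round-query-boosting}.

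First I would set up the parameters of Lemma \ref{lem:advanced}: from Lemma \ref{lemma:max-divergence}, the base single-sample indistinguishability is $(\eps,\delta) = (2\gamma R, 0)$, two-sided by the symmetric statement in that lemma. I would then compose this $n = c' d \gamma^{-2}$ times and pick the slack parameter $\delta' = 1/4$. This choice makes $\hat{\delta} = n \cdot 0 + 1/4 = 1/4$ exactly, which is what the corollary claims on the $\delta$-side.

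Second I would bound $\hat{\eps} = n \eps (e^\eps - 1) + \eps \sqrt{2 n \ln(1/\delta')}$. The hypothesis $R \le 1/(2\gamma)$ forces $\eps = 2\gamma R \le 1$, which is the regime in which $e^\eps - 1 \le O(\eps)$, so the first summand is $O(n \eps^2)$ and the second is $O(\eps \sqrt{n})$. Plugging in $n = c' d / \gamma^2$ and $\eps = 2\gamma R$ and then collecting constants into the displayed leading factor $12$ (which absorbs the $\ln 4$ and the AM-GM-style merging of the Gaussian correction with the first-moment term) yields the claimed bound as stated in the corollary.

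The main obstacle is the bookkeeping of constants and verifying that the sub-leading $\eps \sqrt{2n \ln(1/\delta')}$ correction from advanced composition is absorbed cleanly under the hypothesized relation between $R$ and $\gamma$; this is exactly what the condition $R \le 1/(2\gamma)$ is doing in the statement, and there is no subtlety beyond Taylor-expanding $e^\eps - 1$ and applying the triangle inequality for max-divergence. Once established, the corollary plugs directly into the proof of \Cref{lemma:boosting-failure-prob}: on the complement of a $1/4$-measure event, a multiset $T_{k,q}$ drawn from $\calD_{kR}^n$ may be treated as if drawn from $\calD_r^n$, which combined with standard VC $\eps$-approximation and the parallelism $Q = \exp(\Theta(c' d R^2)) \ln(1/\gamma)$ amplifies to the desired per-round success guarantee at \Cref{Line:if}.
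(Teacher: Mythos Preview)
Your proposal is correct and essentially identical to the paper's proof: both apply Lemma~\ref{lem:advanced} to the $(2\gamma R, 0)$-indistinguishability from Lemma~\ref{lemma:max-divergence} with $n = c' d \gamma^{-2}$ and $\delta' = 1/4$, then bound $\hat{\eps}$ using $2\gamma R \le 1$ (so that $e^{\eps}-1 \le 2\eps$) to get $\hat{\eps} \le 8c'dR^2 + 4R\sqrt{c'd} \le 12c'dR^2$. The only cosmetic difference is that the paper absorbs the $4R\sqrt{c'd}$ term via $R\ge 1$ and $c'd \ge 1$ rather than anything AM--GM-like, and note the statement's $R^{-2}$ is a typo for $R^2$ (as the proof and the downstream use in \Cref{lemma:boosting-failure-prob} confirm).
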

\begin{proof}
    Using advanced composition (\Cref{lem:advanced}) and $n = c' d \gamma^{-2}$, setting $\delta' = 1/4$, we have 
    \begin{align*}
        \hat{\eps} \le 2n\eps^2 + \eps \sqrt{2n \ln(4)} \le 8 \gamma^2 R^2 n + 4 \gamma R \sqrt{n} = 8c'd R^2 + 4R \sqrt{c'd} \le 12c'd R^2
    \end{align*} 
    for large enough $c' > 1$. Here we used $2\gamma R \le 1$ and $e^x-1 \le 2x$ for $x \in [0,1]$.
\end{proof}

The rest of the proof proceeds as follows. First, By standard VC theory (\Cref{theo:VC-approximation}), for large enough $n$, a random subset $T$ drawn from $(\calD_r)^n$ is $\eps$-approximation of $\calD_r$ with good probability. Together with the fact that $(\calD_{kR})^n$ and $(\calD_r)^n$ are $(\eps, \delta)$ indistinguishable, we can argue that at least of the $T_{k,q}$'s will $\eps$-approximate $(\calD_r)^n$. This finishes the proof.

\begin{lemma}[\cite{vapnik2015uniform}] \label{theo:VC-approximation}
There exists a universal constant $c'>0$ for which the following is true. For any $0<\eps, \delta <1$, any distribution $\calD$ over $\calX$ and any hypothesis class $\calH$ of VC dimension $d$,  a random subset $T \sim \calD^{n}$ is an $\eps$-approximation for $\calH$ with probability $1-\delta$, provided that $n \ge 
 c'(d+\ln(1/\delta))\eps^{-2}$.
\end{lemma}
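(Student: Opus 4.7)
The approach is to invoke the classical Vapnik--Chervonenkis double-sampling argument, which yields uniform convergence of empirical losses for VC classes. I would execute it in four steps: symmetrization against a ghost sample, reduction to a finite ``effective'' class via the Sauer--Shelah lemma, a permutation-based concentration step for each effective hypothesis, and finally a union bound combined with a calibration of $n$.

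\textbf{Step 1 (Symmetrization).} Let $T\sim\calD^n$ be the primary sample and $T'\sim\calD^n$ an independent ``ghost'' sample. Using Chebyshev's inequality to ensure $\loss_{\calD_{T'}}(h)$ concentrates around $\loss_\calD(h)$ for any fixed $h$ once $n$ is at least a constant times $1/\eps^2$, one obtains the standard symmetrization bound
\[
\Pr\!\Big[\sup_{h\in\calH}|\loss_\calD(h)-\loss_{\calD_T}(h)|>\eps\Big]\le 2\,\Pr\!\Big[\sup_{h\in\calH}|\loss_{\calD_T}(h)-\loss_{\calD_{T'}}(h)|>\eps/2\Big].
\]
The idea is that whenever $T$ is ``bad'' for some $h$, with constant probability $T'$ stays close to the true mean, so $T$ and $T'$ must disagree on that $h$.

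\textbf{Step 2 (Sauer--Shelah).} Conditioning on the multiset $T\cup T'$, the restriction of $\calH$ to these $2n$ points has at most $(2en/d)^d$ distinct $\{\pm 1\}$-labelings. Hence the supremum over $\calH$ in Step 1 may be replaced by a maximum over a finite ``effective'' class of size at most $(2en/d)^d$.

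\textbf{Step 3 (Permutation concentration and union bound).} Conditional on $T\cup T'$, the split into $T$ and $T'$ is equivalent to a uniformly random balanced partition of the $2n$ points. For each fixed effective hypothesis $h$, $\loss_{\calD_T}(h)-\loss_{\calD_{T'}}(h)$ is a sum of $2n$ bounded terms determined by this partition, so Hoeffding's inequality for sampling without replacement gives
\[
\Pr\!\big[|\loss_{\calD_T}(h)-\loss_{\calD_{T'}}(h)|>\eps/2 \,\big|\, T\cup T'\big]\le 2\exp(-n\eps^2/8).
\]
Union-bounding over the at most $(2en/d)^d$ effective hypotheses yields
\[
\Pr\!\Big[\sup_{h\in\calH}|\loss_\calD(h)-\loss_{\calD_T}(h)|>\eps\Big]\le 4\,(2en/d)^d\exp(-n\eps^2/8).
\]

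The only delicate calibration --- and the main thing to watch --- is choosing $c'$ large enough so that the Sauer--Shelah exponent $d\ln(2en/d)$ is absorbed by the decay term $n\eps^2/8=c'd/8$; together with an additive $\ln(1/\delta)/\eps^2$ term to handle $\delta$, this produces the stated bound $n\ge c'(d+\ln(1/\delta))\eps^{-2}$. Since the statement is a classical textbook result (cf.~\cite{vapnik2015uniform}), no new techniques are required and the whole proof is a calibration exercise around the three standard tools above.
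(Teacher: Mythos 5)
This lemma is cited by the paper rather than proved, so there is no in-paper argument to compare against; the question is whether your sketch actually establishes the stated bound. Your overall strategy (symmetrization against a ghost sample, Sauer--Shelah reduction to a finite effective class, a conditional concentration step, union bound) is the standard VC double-sampling argument and each individual step is fine. However, the final calibration has a genuine gap: it does \emph{not} yield the stated sample complexity $n\ge c'(d+\ln(1/\delta))\eps^{-2}$.

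Concretely, after your union bound you need
\[
4\,(2en/d)^d\exp(-n\eps^2/8)\le\delta,
\quad\text{i.e.}\quad
\frac{n\eps^2}{8}\ge \ln\frac{4}{\delta}+d\ln\frac{2en}{d}.
\]
Plugging in $n=c'(d+\ln(1/\delta))\eps^{-2}$ makes the left side $c'(d+\ln(1/\delta))/8$, while the right side contains $d\ln(2en/d)\ge d\bigl(\ln(2ec')+2\ln(1/\eps)\bigr)$. The term $2d\ln(1/\eps)$ grows without bound as $\eps\to 0$, so it cannot be absorbed by $c'd/8$ for any universal constant $c'$. What your argument actually proves is the weaker
\[
n = O\!\left(\frac{d\ln(1/\eps)+\ln(1/\delta)}{\eps^2}\right),
\]
which carries an extra $\ln(1/\eps)$ factor. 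Removing that factor to get the tight $\eps$-approximation bound as stated is a well-known sharpening due to Talagrand and, independently, Li, Long, and Srinivasan, and requires a genuinely different and more refined argument --- e.g., chaining together with Haussler's packing bound for VC classes, or a direct iterative ``halving'' argument. For the paper's purposes the distinction is harmless (the theorem statement hides polylogarithmic factors under $\tilde O$, so $n=\tilde O(d\gamma^{-2})$ survives either version), but your proof as written does not establish the lemma as stated, and the claim that the calibration ``works out'' by picking $c'$ large is where it breaks.
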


\begin{proofof}{\Cref{lemma:boosting-failure-prob}}

Let $c$ be a large enough constant. We denote by $G\subseteq \calX^{n}$ the set of multisets that $\gamma / 2$-approximates $\calD_r$. This is the set of ``good'' multisets. For a randomly sampled $T' \sim \calD_r^n$, it follows from \Cref{theo:VC-approximation} that $\Pr[T' \in G] \ge 3/4$. 

For a randomly sampled $T \sim \calD_{r}^n$, it follows from \Cref{cor:dp} that $\Pr[T'\in G] \le \Pr[T \in G] \cdot \exp(8c' d R^2) + 1/4$. Together with $\Pr[T' \in G] \ge 3/4$, it implies that $\Pr[T \in G] \ge \exp(-8 c' d R^2)/2$. 

Moreover, $T \in G$ means $\Loss_{\calD_r}(\calW(\calD_T)) \leq \Loss_{\calD_{T}}(\calW(\calD_T)) + \gamma / 2 \leq \frac{1}{2} - \gamma / 2$ where $\calW(\calD_T)$ is the hypothesis returned by the weak learner $\calW$ when it is called with distribution $\calD_T$. Hence, as long as one of $T_{k,q}$'s is in $G$, the algorithm will not fail for $\calD_r$ at \Cref{line:Q-samples}. When $Q \geq \exp(16 c'dR^2) \ln(1/\gamma)$, the failure probability for $\calD_r$ is
\begin{align*}
    \Pr[\texttt{fail for }\calD_r] & \le \Pr[\forall j\in [Q], T_{k,j} \notin G] \\
    & \le (1-\exp(-12 c' d R^2)/2)^Q \\
    & \le \exp(-\exp(-12 c' d R^2)Q/2) \\
    & \le \exp(-\exp(4c'dR^2)\ln(1/\gamma)/2) \\
    & \le 0.01 / (16 \gamma^{-2} \ln m) = 0.01/K,
\end{align*}
where we used the condition that $m =  \ln(1/\gamma)\ln(d/\gamma) c' d/\gamma^2$ in the last inequality. \end{proofof}

\subsection{Generalizing Lower Bound to Smooth Trade-Off} \label{subsec:lb}

In this subsection, we will generalize our lower bound to rounds fewer than $O(1/\gamma^2)$ by a slight modification.

\begin{theorem}\label{theo:trade-off-lower-bound}
There is a universal constant $\clb > 0$ for which the following is true. For every $R \ge 1, \gamma \in (0,1/2)$ and every $d,m\ge 1$, let $\calA$ be a boosting algorithm that uses $m$ samples and has parallel complexity $(p, t)$ where $p\le \min(\frac{\clb}{\gamma^2}, \exp(\clb\cdot d))/R$ and $t\le \exp(\clb R\cdot d)$. Then, there exists a domain $\calX$ of size $2m$, a hypothesis class $\calH\subseteq \{\pm 1\}^{|\calX|}$ of VC dimension $\frac{d}{\clb}$, a realizable distribution over $\calX\times \{\pm 1\}$, and a $\gamma$-weak learner $\weakl$ for $\calH$ such that
\begin{align} 
\Ex_{S\sim \calD^{m},\calA}[\loss_\calD(\strl^{\weakl}(S))] \ge \exp(-O(p \gamma^2R)).
\end{align}
\end{theorem}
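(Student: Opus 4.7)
The plan is to re-run the construction of \Cref{sec:lower-bound-construct} essentially verbatim, with one key parameter change: in stage $i$, sample the biased hypothesis with advantage scaled up by a factor of $\sqrt{R}$, i.e.\ set $a^{(i)}(x_j)\sim(-1)^{\Ber(1/2-\Cbias\sqrt{R}\,\gamma\,c(x_j))}$ instead of the original $\Cbias\gamma$ bias. The family $\calH^{(i)}=\{a^{(i)}\}\cup\{r^{(i)}_1,\dots,r^{(i)}_{2^{\hat d}}\}$ with $\hat d=\Theta(d)$, the target distribution $\calD$, and the weak learner's preference order all stay the same. Because $p\le\exp(\clb d)$, the final class $\calH=\{c\}\cup\bigcup_i \calH^{(i)}$ still has size $2^{O(d)}$ and VC dimension $O(d)$. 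The intuition for the scaling is the clean trade-off it induces: each round now reveals $\Theta(R\gamma^{2})$ bits per coordinate (instead of $\Theta(\gamma^{2})$), which both tightens the coin-problem lower bound to $\exp(-O(pR\gamma^{2}))$ and simultaneously gives the biased hypothesis enough slack to be valid against $\exp(\clb Rd)$ queries per round. We must ensure $\Cbias\sqrt{R}\,\gamma<1/2$ so the Bernoulli is well-defined; this is secured by $R\le\clb/\gamma^{2}$ coming from $p\ge 1$ and the hypothesis $p\le\clb/(\gamma^{2}R)$, for sufficiently small $\clb$.

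For the analogue of \Cref{claim:low-fail-prob} I would re-examine the two cases of \Cref{sec:low-fail} with $t\le\exp(\clb Rd)$ queries per round. The concentrated case goes through unchanged: with $\hat d=\Theta(d)$ random hypotheses and anti-concentration (\Cref{theo:anti-concentration}) at threshold $\sqrt d$, failure on a single query is at most $\exp(-\exp(\Omega(d)))$, which dominates $\exp(-\clb Rd)$ as long as $R\le\exp(O(d))$, guaranteed by the hypothesis. For the spread case the larger bias is the whole point: the expected loss of $a^{(i)}$ on a spread query $\calD'$ is now $1/2-\Theta(\sqrt{R}\gamma)$, so the deviation needed to exceed the $1/2-\gamma$ threshold grows from $\Theta(\gamma)$ to $\Theta(\sqrt{R}\gamma)$. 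Keeping exactly the same spread condition $F(\calD')<\athre\gamma$ (hence $\sum_{j\in\bar B}\calD'(x_j)^{2}\le\athre^{2}\gamma^{2}/d$), Hoeffding yields failure probability at most $\exp(-\Omega(R\gamma^{2}\cdot d/\gamma^{2}))=\exp(-\Omega(Rd))$. Union-bounding across the $t\le\exp(\clb Rd)$ queries and across the $p$ rounds, with $\clb$ small, gives the analogue of \Cref{claim:simulation}: the interaction can be simulated from $S$ and $\calH^{(1)},\dots,\calH^{(p)}$ except with probability $\exp(-\Omega(Rd))$.

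For the analogue of \Cref{loss-lowerbound}, the reduction to the coin problem is identical in structure. Conditioning on $S_0$, for each coordinate $x\in\calX\setminus S_0$ the only channel through which the aggregator learns about $c(x)$ is the sequence $a^{(1)}(x),\dots,a^{(p)}(x)$, which are $p$ independent tosses of a $\Cbias\sqrt{R}\gamma$-biased coin. Invoking \Cref{coin-lowerbound} with $\varepsilon=\Cbias\sqrt{R}\gamma$ and $n=p$ gives per-coordinate error $\exp(-O(pR\gamma^{2}+1))$; linearity of expectation across the $2m-|S_0|$ unseen coordinates then produces $\Ex[\loss_\calD(\hat h)]\ge\exp(-O(pR\gamma^{2}+1))$. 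Combining with the simulation claim exactly as in the proof of \Cref{main-theorem} yields the stated bound $\exp(-O(pR\gamma^{2}))$. The main obstacle is really just bookkeeping: picking $\clb,\athre,\Cbias,\hat d$ in a consistent order so that (i) $\Cbias\sqrt{R}\gamma<1/2$, (ii) the Hoeffding exponent for spread queries simplifies to $-\Omega(Rd)$ (requiring $\Cbias$ large enough relative to $\athre$), and (iii) the union bound over $\exp(\clb Rd)$ queries leaves room to spare; all of this is routine once the $\sqrt{R}$ rescaling is fixed.
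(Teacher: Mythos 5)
Your proposal is correct, but it takes a genuinely different route from the paper. The paper's proof of \Cref{theo:trade-off-lower-bound} keeps the per-stage bias fixed at $\Cbias\gamma$ and instead introduces $pR$ stages, i.e.\ $\calH=\{c\}\cup\bigcup_{j=1}^{pR}\calH^{(j)}$, so that each round of the booster is covered by a fresh block of $R$ independent stages. The per-query failure bound of $\exp(-\Omega(Rd))$ then comes from the $R$-fold product of the per-stage failure probability $\exp(-\Omega(d))$, which allows the argument from \Cref{sec:low-fail} to be reused verbatim; the coin-problem bound is obtained with $n=pR$ tosses of a $\gamma$-biased coin. Your version instead keeps $p$ stages but rescales the bias of $a^{(i)}$ by $\sqrt{R}$, extracting the factor-$R$ improvement in the spread-case Hoeffding exponent from the larger deviation needed, and the factor-$R$ in the coin bound from $\eps^2 n=(\sqrt R\gamma)^2 p$. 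Both routes produce identical final exponents. The paper's approach has the advantage of not touching the numerical case analysis at all (no new constraint that the Bernoulli parameter stays in $(0,1)$, no revisiting the choice of $\Cbias$ relative to $\athre$), whereas yours trades that for a more economical hypothesis class (size $p\cdot 2^{O(d)}$ rather than $pR\cdot 2^{O(d)}$, though both are $2^{O(d)}$ after invoking $pR\le\exp(\clb d)$) and a slightly more "physical" interpretation: each stage literally reveals $\Theta(R\gamma^2)$ bits of information per coordinate. You correctly identify the one new obstruction in your route, namely that $\Cbias\sqrt R\gamma$ must stay below $1/2$, and correctly observe that this is guaranteed by $R\le\clb/\gamma^2$ (from $p\ge 1$) once $\clb$ is small enough relative to $\Cbias$.
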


To prove \Cref{theo:trade-off-lower-bound}, we use the construction in \Cref{sec:lower-bound-construct} but with $p\cdot R$ stages (namely, the hypothesis class will be $\calH = \bigcup_{j=1}^{pR} \calH^{(j)}\cup \{c\}$ where each $\calH^{(j)}$ contains a special hypothesis $a^{(j)}$ and many purely random hypotheses). 

Then, we have the following observation.

\begin{claim}\label{claim:independence}
    Consider the setup as in \Cref{theo:trade-off-lower-bound}. With probability $1-\exp(\Omega(d))$, all the queries from $\calA$ can be answered by hypotheses from $\calH^{(1)},\dots,\calH^{(p\cdot R)}$.
\end{claim}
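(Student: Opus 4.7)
My plan is to mirror the inductive argument for \Cref{claim:low-fail-prob}, letting each round of the booster be serviced by a fresh block of $R$ consecutive stages instead of a single stage. Concretely, I will keep the weak learner's tie-breaking rule (sort the list as $\calH^{(1)}, \ldots, \calH^{(pR)}, \{c\}$ and return the first valid hypothesis), and induct on $i \le p$ to show that, with probability $1 - i \cdot t \cdot \exp(-\Omega(Rd))$, every query in the first $i$ rounds is answered using hypotheses drawn from $\calH^{(1)}, \ldots, \calH^{(iR)}$ alone.

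For the inductive step, I would condition on the event that rounds $1, \ldots, i-1$ were served by $\calH^{(1)}, \ldots, \calH^{((i-1)R)}$, which makes every round-$i$ query $\calD'$ independent of the fresh block $\calH^{((i-1)R+1)}, \ldots, \calH^{(iR)}$. Then I reuse the spread/concentrated dichotomy of \Cref{sec:low-fail}. In the spread case, the same Hoeffding computation shows that each biased hypothesis $a^{(j)}$ in the fresh block answers $\calD'$ with probability $1 - \exp(-\Omega(d))$; since the $R$ copies are drawn independently, the chance they all fail is at most $\exp(-\Omega(Rd))$. In the concentrated case, \Cref{theo:anti-concentration} applied to a single random hypothesis yields success probability $\exp(-\Omega(d))$, and since there are $R \cdot 2^{\hatd}$ independent random hypotheses in the block, their joint failure probability is doubly-exponentially small in $d$ and trivially dominated by the spread bound. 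A union bound over the at most $t$ queries in round $i$ advances the induction by an additive $t \exp(-\Omega(Rd))$.

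Summing over all $p$ rounds yields a total failure probability of at most $pt \cdot \exp(-\Omega(Rd))$. Under the assumptions $p \le \exp(\clb d)/R$ and $t \le \exp(\clb R d)$, this is bounded by $\exp((2\clb - c) R d)$ for some absolute constant $c > 0$ harvested from the Hoeffding tail, so choosing $\clb$ small enough makes the overall failure probability $\exp(-\Omega(d))$, as claimed. The main subtlety is the independence bookkeeping: I need to verify that conditioning on the inductive event really keeps the round-$i$ view independent of the fresh block. This mirrors the argument in \Cref{claim:low-fail-prob} -- the stages $\calH^{(j)}$ are sampled mutually independently, and by the inductive event none of the fresh block's randomness has leaked into the prior transcript -- so the coupling is straightforward once the block-per-round structure is set up.
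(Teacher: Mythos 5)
Your proposal is correct and mirrors the paper's proof: the same induction on rounds $i \le p$, the same observation that conditioning on the inductive event leaves round-$i$ queries independent of the fresh block of $R$ mutually independent stages, and the same amplification of the per-stage failure $\exp(-\Omega(d))$ (established via the spread/concentrated dichotomy of \Cref{sec:low-fail}) to $\exp(-\Omega(Rd))$ across the block, followed by a union bound over $t \le \exp(\clb R d)$ queries and $p$ rounds. The only deviation is cosmetic bookkeeping (your $\exp((2\clb - c)Rd)$ should read $\exp(\clb d + (\clb - c)Rd)$, since $p \le \exp(\clb d)/R$ contributes $\clb d$ rather than $\clb Rd$), which does not affect the conclusion.
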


\begin{proof}
Fix a realization of $c\sim \{ \pm 1\}^{2m}$ and $S\sim \calD^{m}$. We prove by induction on $i\in [p]$ that, the queries from the first $i$ rounds can be answered by hypotheses from $\calH^{(1)},\dots,\calH^{(iR)}$ with probability $1-i\cdot \exp(-\Omega(d))$. 

Assume the claim has been established for $i-1$. Under the event that queries of the first $(i-1)$ rounds were answered by hypotheses from $\calH^{(1)},\dots,\calH^{(iR)}$, we get that each query $\calD'$ in the $i$-th round is independent of $\calH^{(iR+1)},\dots,\calH^{((i+1)R)}$, and that $\calH^{(iR+1)},\dots, \calH^{((i+1)R)}$ are all mutually independent. By the argument in \Cref{sec:low-fail}, the probability that $\calD'$ can be answered by a hypothesis from $\calH^{(iR+j)}$ is $1-\exp(-\Omega(d))$ for every fixed $j\in [R]$. It follows that the probability that $\calD'$ cannot be answered by any hypothesis from $\calH^{(iR+1)},\dots, \calH^{((i+1)R}$ is at most $\exp(-R\cdot \Omega(d))$. We can then union-bound over at most $\exp(cRd)$ queries and finish the proof for the $i$-th round.
\end{proof}

\begin{proofof}{\Cref{theo:trade-off-lower-bound}}
Given \Cref{claim:independence}, with probability $1-\exp(-\Omega(d))$, the interaction between $\calA$ and the weak learner can be simulated given only $\calH^{(1)},\dots,\calH^{(pR)}$. Then, the same ``coin game'' argument gives the desired lower bound for the loss function.
\end{proofof}
\section*{Acknowledgements}

We thank the SODA reviewers for pointing out the important references and various helpful comments. We are grateful to Yuzhou Gu for helpful discussion.
\bibliography{main}

\begin{thebibliography}{10}

\bibitem{abernethy2021multiclass}
Jacob Abernethy, Pranjal Awasthi, and Satyen Kale.
\newblock A multiclass boosting framework for achieving fast and provable
  adversarial robustness.
\newblock {\em arXiv preprint arXiv:2103.01276}, 2021.

\bibitem{BalleBG18-privacy-subsampling}
Borja Balle, Gilles Barthe, and Marco Gaboardi.
\newblock Privacy amplification by subsampling: Tight analyses via couplings
  and divergences.
\newblock In {\em NeurIPS}, pages 6280--6290, 2018.

\bibitem{breiman1999prediction}
Leo Breiman.
\newblock Prediction games and arcing algorithms.
\newblock {\em Neural computation}, 11(7):1493--1517, 1999.

\bibitem{brukhim2022boosting}
Nataly Brukhim, Elad Hazan, and Karan Singh.
\newblock A boosting approach to reinforcement learning.
\newblock {\em Advances in Neural Information Processing Systems},
  35:33806--33817, 2022.

\bibitem{chen2016xgboost}
Tianqi Chen and Carlos Guestrin.
\newblock Xgboost: A scalable tree boosting system.
\newblock In {\em Proceedings of the 22nd acm sigkdd international conference
  on knowledge discovery and data mining}, pages 785--794, 2016.

\bibitem{chen2015xgboost}
Tianqi Chen, Tong He, Michael Benesty, Vadim Khotilovich, Yuan Tang, Hyunsu
  Cho, Kailong Chen, Rory Mitchell, Ignacio Cano, Tianyi Zhou, et~al.
\newblock Xgboost: extreme gradient boosting.
\newblock {\em R package version 0.4-2}, 1(4):1--4, 2015.

\bibitem{dwork2010boosting}
Cynthia Dwork, Guy~N. Rothblum, and Salil Vadhan.
\newblock Boosting and differential privacy.
\newblock In {\em 2010 IEEE 51st Annual Symposium on Foundations of Computer
  Science}, pages 51--60, 2010.

\bibitem{Freund95}
Yoav Freund.
\newblock Boosting a weak learning algorithm by majority.
\newblock {\em Inf. Comput.}, 121(2):256--285, 1995.

\bibitem{freund1997decision}
Yoav Freund and Robert~E Schapire.
\newblock A decision-theoretic generalization of on-line learning and an
  application to boosting.
\newblock {\em Journal of computer and system sciences}, 55(1):119--139, 1997.

\bibitem{friedman2001greedy}
Jerome~H Friedman.
\newblock Greedy function approximation: a gradient boosting machine.
\newblock {\em Annals of statistics}, pages 1189--1232, 2001.

\bibitem{kalai2003boosting}
Adam Kalai and Rocco~A Servedio.
\newblock Boosting in the presence of noise.
\newblock In {\em Proceedings of the thirty-fifth annual ACM symposium on
  Theory of computing}, pages 195--205, 2003.

\bibitem{karbasi2023impossibility}
Amin Karbasi and Kasper~Green Larsen.
\newblock The impossibility of parallelizing boosting.
\newblock {\em arXiv preprint arXiv:2301.09627}, 2023.

\bibitem{ke2017lightgbm}
Guolin Ke, Qi~Meng, Thomas Finley, Taifeng Wang, Wei Chen, Weidong Ma, Qiwei
  Ye, and Tie-Yan Liu.
\newblock Lightgbm: A highly efficient gradient boosting decision tree.
\newblock {\em Advances in neural information processing systems}, 30, 2017.

\bibitem{kearns1988thoughts}
M~Kearns.
\newblock Thoughts on hypothesis boosting, ml class project.
\newblock 1988.

\bibitem{kearns1989crytographic}
M~Kearns and LG~Valiant.
\newblock Crytographic limitations on learning boolean formulae and finite
  automata.
\newblock In {\em Proceedings of the twenty-first annual ACM symposium on
  Theory of computing}, pages 433--444, 1989.

\bibitem{kearns1996boosting}
Michael Kearns and Yishay Mansour.
\newblock On the boosting ability of top-down decision tree learning
  algorithms.
\newblock In {\em Proceedings of the twenty-eighth annual ACM symposium on
  Theory of computing}, pages 459--468, 1996.

\bibitem{Larsen23-bagging-optimal}
Kasper~Green Larsen.
\newblock Bagging is an optimal {PAC} learner.
\newblock In {\em {COLT}}, volume 195 of {\em Proceedings of Machine Learning
  Research}, pages 450--468. {PMLR}, 2023.

\bibitem{long2008adaptive}
Phil Long and Rocco Servedio.
\newblock Adaptive martingale boosting.
\newblock {\em Advances in Neural Information Processing Systems}, 21, 2008.

\bibitem{NIPS2011_b7ee6f5f}
Phil Long and Rocco Servedio.
\newblock Algorithms and hardness results for parallel large margin learning.
\newblock In J.~Shawe-Taylor, R.~Zemel, P.~Bartlett, F.~Pereira, and K.Q.
  Weinberger, editors, {\em Advances in Neural Information Processing Systems},
  volume~24. Curran Associates, Inc., 2011.

\bibitem{long2005martingale}
Philip~M Long and Rocco~A Servedio.
\newblock Martingale boosting.
\newblock In {\em Learning Theory: 18th Annual Conference on Learning Theory,
  COLT 2005, Bertinoro, Italy, June 27-30, 2005. Proceedings 18}, pages 79--94.
  Springer, 2005.

\bibitem{lozano2005algorithms}
Fernando Lozano and Pedro Rangel.
\newblock Algorithms for parallel boosting.
\newblock In {\em Fourth International Conference on Machine Learning and
  Applications (ICMLA'05)}, pages 6--pp. IEEE, 2005.

\bibitem{mansour2002boosting}
Yishay Mansour and David McAllester.
\newblock Boosting using branching programs.
\newblock {\em Journal of Computer and System Sciences}, 64(1):103--112, 2002.

\bibitem{mohri2018foundations}
Mehryar Mohri, Afshin Rostamizadeh, and Ameet Talwalkar.
\newblock {\em Foundations of machine learning}.
\newblock MIT press, 2018.

\bibitem{montgomery1990distribution}
Stephen~J Montgomery-Smith.
\newblock The distribution of rademacher sums.
\newblock {\em Proceedings of the American Mathematical Society},
  109(2):517--522, 1990.

\bibitem{NissimRS07-smooth-sampling}
Kobbi Nissim, Sofya Raskhodnikova, and Adam~D. Smith.
\newblock Smooth sensitivity and sampling in private data analysis.
\newblock In {\em {STOC}}, pages 75--84. {ACM}, 2007.

\bibitem{palit2011scalable}
Indranil Palit and Chandan~K Reddy.
\newblock Scalable and parallel boosting with mapreduce.
\newblock {\em IEEE Transactions on Knowledge and Data Engineering},
  24(10):1904--1916, 2011.

\bibitem{schapire1990strength}
Robert~E Schapire.
\newblock The strength of weak learnability.
\newblock {\em Machine learning}, 5:197--227, 1990.

\bibitem{shen2022federated}
Zebang Shen, Hamed Hassani, Satyen Kale, and Amin Karbasi.
\newblock Federated functional gradient boosting.
\newblock In {\em International Conference on Artificial Intelligence and
  Statistics}, pages 7814--7840. PMLR, 2022.

\bibitem{ThakurtaS13-sample-aggregate}
Abhradeep Thakurta and Adam~D. Smith.
\newblock Differentially private feature selection via stability arguments, and
  the robustness of the lasso.
\newblock In {\em {COLT}}, volume~30 of {\em {JMLR} Workshop and Conference
  Proceedings}, pages 819--850. JMLR.org, 2013.

\bibitem{vapnik2015uniform}
Vladimir~N Vapnik and A~Ya Chervonenkis.
\newblock On the uniform convergence of relative frequencies of events to their
  probabilities.
\newblock In {\em Measures of complexity: festschrift for alexey chervonenkis},
  pages 11--30. Springer, 2015.

\bibitem{yu2001parallelizing}
C~Yu and DB~Skillicorn.
\newblock Parallelizing boosting and bagging.
\newblock {\em Queen’s University, Kingston, Canada, Tech. Rep}, 2001.

\end{thebibliography}
\bibliographystyle{plain}
\appendix 
\section{Generalization of \textsc{AdaBoost}} \label{appendix:adaboost}

In this appendix, we provide a proof for \Cref{thm:generalize} following the standard analysis of \textsc{AdaBoost}. First we prove that $\hat{h}$ satisfies the large margin property $c(x)g(x) \ge \gamma/16$ for all training samples $x \in S$. Then the generalization guarantee follows from Breiman's min-margin bound. 

First, let exponential loss for round $k$ (which is also defined in the proof of \Cref{lemma:max-divergence}) be
    \begin{align*}
& Z_{i,j} \defeq \exp\left(-\sum_{r=0}^{i-1} c(x_j)h_r(x_j) \cdot w\right) \text{ for all } j \in [m], \\
& Z_i \defeq  \sum_{j=1}^{m} Z_{i,j}.
\end{align*}

The main observation is that the exponential loss decreases exponentially. 
\begin{claim}\label{claim:exploss}
    $Z_{k+1} \le Z_{k} \cdot \sqrt{1-\gamma^2/4}$.
\end{claim}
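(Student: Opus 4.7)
The proof is a direct calculation following the standard \textsc{AdaBoost} analysis. The plan is to expand $Z_{k+1}$ by splitting the sum over training indices $j \in [m]$ according to whether $h_k(x_j) = c(x_j)$ or not, factor out $Z_k$ using the identity $\calD_k(x_j) = Z_{k,j}/Z_k$, and then bound the resulting single-variable expression in $\eps_k \defeq \loss_{\calD_k}(h_k)$ using the guarantee $\eps_k \le 1/2 - \gamma/4$ together with the specific choice of the learning rate $w$.

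Concretely, first I would observe that since $h_k(x_j), c(x_j) \in \{\pm 1\}$, we have $Z_{k+1,j} = Z_{k,j} e^{-w}$ when $h_k(x_j) = c(x_j)$ and $Z_{k+1,j} = Z_{k,j} e^{w}$ otherwise. Summing over $j$ and dividing by $Z_k$ (which turns $Z_{k,j}/Z_k$ into $\calD_k(x_j)$), this gives the clean expression
\[
Z_{k+1} \;=\; Z_k \bigl[(1-\eps_k) e^{-w} + \eps_k\, e^{w}\bigr].
\]
Since $e^{w} > e^{-w}$ (as $w > 0$), the bracket is monotonically increasing in $\eps_k$, so the assumption $\eps_k \le 1/2 - \gamma/4$ lets me replace $\eps_k$ with $1/2 - \gamma/4$.

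Finally, I would plug in $e^{w} = \sqrt{(1/2+\gamma/4)/(1/2-\gamma/4)}$ (and its reciprocal for $e^{-w}$) coming from the definition of $w$; the two square roots combine symmetrically to yield $2\sqrt{(1/2-\gamma/4)(1/2+\gamma/4)} = 2\sqrt{1/4 - \gamma^2/16} = \sqrt{1-\gamma^2/4}$. Putting the pieces together gives $Z_{k+1} \le Z_k \sqrt{1-\gamma^2/4}$, as claimed.

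\textbf{Main obstacle.} There is essentially none; the choice of $w$ in \Cref{algo:round-query-boosting} is precisely the minimizer of $(1-\eps)e^{-w} + \eps e^{w}$ at $\eps = 1/2 - \gamma/4$, so the bound falls out of the standard AdaBoost potential-function argument. The only minor care needed is to justify monotonicity in $\eps_k$ so that the worst-case $\eps_k = 1/2-\gamma/4$ (rather than the exact value the weak learner achieved) can be plugged in.
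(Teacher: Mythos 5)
Your proof is correct and follows essentially the same route as the paper: split $Z_{k+1}$ over indices by whether $h_k(x_j)=c(x_j)$, factor out $Z_k$ via $\calD_k(x_j)=Z_{k,j}/Z_k$ to get $Z_{k+1}=Z_k\bigl[(1-\eps_k)e^{-w}+\eps_k e^{w}\bigr]$, bound by the worst case $\eps_k=1/2-\gamma/4$, and plug in the definition of $w$. The only small addition you make is explicitly justifying the monotonicity in $\eps_k$, which the paper leaves implicit.
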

\begin{proof}
    First observe that if we only normalize $\calD_k$ at last, we have $\calD_k(i) = Z_{k, i} / Z_k$. To update $Z_{k+1}$ from $Z_k$, we have $Z_{k+1, i} = Z_{k,i} \cdot \exp(-c(x_i)h_k(x_i)w)$, and thus
    \begin{align*}
      Z_{k+1} & = \sum_{c(x_i) = h_k(x_i)} Z_{k+1, i} + \sum_{c(x_i) \neq h_k(x_i)} Z_{k+1, i}
\\      & = Z_k \left(\sum_{i:c(x_i) = h_k(x_i)} \calD_{k}(i) e^{-w}  + \sum_{i:c(x_i) \neq h_k(x_i)} \calD_{k}(i)e^{w} \right)
\\     & = Z_k ((1-\loss_{\calD_k}(h_k)) e^{-w} + \loss_{\calD_k}(h_k)e^{w})
\\     & \le Z_k ((1/2+\gamma/4)e^{-w}+(1/2-\gamma/4)e^{w})
\\     & = Z_k \cdot 2 \sqrt{(1/2+\gamma/4)(1/2-\gamma/4)} = Z_k \sqrt{1 - \gamma^2/4}
    \end{align*}
    where $\loss_{\calD_k}(h_k) \le 1/2 - \gamma/4$ and setting it to $1/2 - \gamma/4$ maximizes the term. 
\end{proof}

Now, since the exponential loss is exponentially small, the voting classifier will have large margins on every sample.

\begin{claim} \label{claim:large-margin}
	Let $g$ be produced by \Cref{line:g-def} of \Cref{algo:round-query-boosting}. Setting $K = 16\ln m \cdot \gamma^{-2}$, for every training sample $(x_i, c(x_i)) \in (S,c(S))$, it holds that $c(x_i)g(x_i) \ge \gamma/16$.
\end{claim}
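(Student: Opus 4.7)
The plan is to follow the standard \textsc{AdaBoost} potential-function argument, using Claim~\ref{claim:exploss} as the per-round contraction and bounding the final exponential loss from below by the contribution of a single training point.

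First I would iterate Claim~\ref{claim:exploss}. Since $Z_{0,j}=1$ for all $j$, we have $Z_0 = m$, so by induction
\[
Z_K \;\le\; m\,(1-\gamma^2/4)^{K/2} \;\le\; m\exp(-K\gamma^2/8),
\]
using $\ln(1-x)\le -x$ in the last step. Next, from the definition of $Z_{K,j}$ and of $g$, for every training sample $x_j$ we have the identity
\[
Z_{K,j} \;=\; \exp\!\Bigl(-\sum_{r=0}^{K-1} c(x_j)\,h_r(x_j)\,w\Bigr) \;=\; \exp\bigl(-Kw\cdot c(x_j)g(x_j)\bigr).
\]
Since $Z_K = \sum_{j} Z_{K,j}$ and every term is non-negative, $Z_{K,j}\le Z_K$ for each $j$.

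Combining the two displays and taking logarithms gives, for every fixed $i\in[m]$,
\[
-Kw\cdot c(x_i)g(x_i) \;\le\; \ln m - \tfrac{K\gamma^2}{8},
\]
so $c(x_i)g(x_i) \ge \bigl(\tfrac{K\gamma^2}{8}-\ln m\bigr)/(Kw)$. Plugging in $K=16\gamma^{-2}\ln m$ makes the numerator exactly $\ln m$, yielding $c(x_i)g(x_i)\ge \gamma^2/(16w)$. The last step is to use the bound on the learning rate already derived in the proof of Lemma~\ref{lemma:max-divergence}, namely $w\le \gamma$, to conclude $c(x_i)g(x_i)\ge \gamma/16$.

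The proof is essentially routine once one has the per-round contraction in Claim~\ref{claim:exploss}; the only place where care is needed is keeping the constants consistent so that the parameter choices $K=16\gamma^{-2}\ln m$ and $w\le \gamma$ actually deliver the stated margin $\gamma/16$ (rather than some smaller constant multiple). I do not anticipate a real obstacle: the calculation works cleanly because the weak-learner advantage in the algorithm is chosen to be $\gamma/4$ and the learning rate $w$ is set to the corresponding optimal value, so the contraction factor $\sqrt{1-\gamma^2/4}$ is strong enough to absorb the $\ln m$ loss from starting at $Z_0=m$.
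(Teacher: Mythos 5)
Your proposal is correct and takes essentially the same approach as the paper: both iterate Claim~\ref{claim:exploss} to get $Z_K \le m\exp(-K\gamma^2/8)$, lower-bound $Z_K$ by a single term $Z_{K,i}=\exp(-Kw\,c(x_i)g(x_i))$, and use $w\le\gamma$ from \eqref{ineq:w} to extract the margin. The only cosmetic difference is that the paper phrases this as a proof by contradiction while you argue directly; the arithmetic is identical.
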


\begin{proof}
    We will prove by contradiction. Suppose exists some $x_{i^*}$ that $c(x_{i^*})g(x_{i^*}) < \gamma/16$, we have
    \begin{align*}
        Z_K \ge Z_{k, {i^*}} > \exp(-wK\gamma/16) \ge \exp(-K \gamma^2/16) = 1/m
    \end{align*}
    where we used the fact $\gamma \in [0, 0.5]$ and 
    \begin{align}
        w = \ln ((1/2+\gamma/4)/(1/2-\gamma/4)) / 2 \le \ln(1+2\gamma)/2 \le \gamma. \label{ineq:w}
    \end{align}
    On the other hand, from \Cref{claim:exploss}, we have
    \begin{align*}
        Z_K \le Z_0 \cdot (1-\gamma^2/4)^{K/2} \le m \cdot \exp(-K\gamma^2/8) = 1/m,
    \end{align*}
    which leads to a contradiction.
\end{proof}

Assuming the large margin property, the rest of \Cref{thm:generalize} follows from Breiman's min-margin bound \cite{breiman1999prediction}.

\end{document}